\newtheorem{theorem}{Theorem}
\newtheorem{lemma}{Lemma}
\newtheorem{corollary}{Corollary}
\newtheorem{definition}{Definition}
\newtheorem{remark}{Remark}
\def\expct{{\mathbb{E}}}
\def\indic{{\mathbbm{1}}}
\def \nlabels{K}
\def \arity{M}
\def \nnodes{N}
\def \repr{\mathbf{r}}
\def \propor{q}
\DeclareMathOperator{\sign}{sign}
\def \nlabels{K}
\def \arity{M}
\def \nnodes{N}
\def \repr{\mathbf{r}}
\def \propor{q}
\icmltitlerunning{Simultaneous Learning of Trees and Representations for Extreme Classification and Density Estimation}
\begin{document} 

\twocolumn[
\icmltitle{Simultaneous Learning of Trees and Representations for Extreme Classification and Density Estimation}



\icmlsetsymbol{equal}{*}

\begin{icmlauthorlist}
\icmlauthor{Yacine Jernite}{to}
\icmlauthor{Anna Choromanska}{to}
\icmlauthor{David Sontag}{goo}
\end{icmlauthorlist}

\icmlaffiliation{to}{New York University, New York, New York, USA}
\icmlaffiliation{goo}{Massachussets Institute of Technology, Cambridge, Massachussets, USA}

\icmlcorrespondingauthor{Yacine Jernite}{jernite@cs.nyu.edu}
\icmlcorrespondingauthor{Anna Choromanska}{ac5455@nyu.edu}
\icmlcorrespondingauthor{David Sontag}{dsontag@mit.edu}


\vskip 0.3in
]



\printAffiliationsAndNotice{}  

\begin{abstract}
We consider multi-class classification where the predictor has a hierarchical structure that allows for a very large number of labels both at train and test time. The predictive power of such models can heavily depend on the structure of the tree, and although past work showed how to learn the tree structure, it expected that the feature vectors remained static. We provide a novel algorithm to simultaneously perform representation learning for the input data and learning of the hierarchical predictor. Our approach optimizes an objective function which favors balanced and easily-separable multi-way node partitions. We theoretically analyze this objective, showing that it gives rise to a boosting style property and a bound on classification error. We next show how to extend the algorithm to conditional density estimation. We empirically validate both variants of the algorithm on text classification and language modeling, respectively, and show that they compare favorably to common baselines in terms of accuracy and running time.
\end{abstract}

\section{Introduction}
Several machine learning settings are concerned with performing predictions in a very large discrete label space. From extreme multi-class classification to language modeling, one commonly used approach to this problem reduces it to a series of choices in a tree-structured model, where the leaves typically correspond to labels. While this allows for faster prediction, and is in many cases necessary to make the models tractable, the performance of the system can depend significantly on the structure of the tree used, e.g.~\cite{NIPS2008_3583}. 

Instead of relying on possibly costly heuristics~\cite{NIPS2008_3583}, extrinsic hierarchies~\cite{morin2005hierarchical} which can badly generalize across different data sets, or purely random trees, we provide an efficient data-dependent algorithm for tree construction and training. Inspired by the LOM tree algorithm~\cite{NIPS2015_5937} for binary trees, we present an objective function which favors high-quality node splits, i.e. balanced and easily separable. In contrast to previous work, our objective applies to trees of arbitrary width and leads to guarantees on model accuracy. Furthermore, we show how to successfully optimize it in the setting when the data representation needs to be learned simultaneously with the classification tree.

Finally, the multi-class classification problem is closely related to that of conditional density estimation~\cite{Ram:2011:DET:2020408.2020507, bishop:2006:PRML} since both need to consider all labels (at least implicitly) during learning and at prediction time. Both problems present similar difficulties when dealing with very large label spaces, and the techniques that we present in this work can be applied indiscriminately to either. Indeed, we show how to adapt our algorithm to efficiently solve the conditional density estimation problem of learning a language model which uses a tree structured objective.

This paper is organized as follows: Section \ref{sec:related} discusses related work, Section \ref{sec:bg} outlines the necessary background and defines the flat and tree-structured objectives for multi-class classification and density estimation, Section \ref{sec:algorithm} presents the objective and the optimization algorithm, Section \ref{sec:classlm} contains theoretical results, Section~\ref{sec:application} adapts the algorithm to the problem of language modeling, Section~\ref{sec:experiments} reports empirical results on the Flickr tag prediction dataset and Gutenberg text corpus, and finally Section~\ref{sec:conclusion} concludes the paper. Supplementary material contains additional material and proofs of theoretical statements of the paper. We also release the C++ implementation of our algorithm.

\section{Related Work}
\label{sec:related}

The multi-class classification problem has been addressed in the literature in a variety of ways. Some examples include i) clustering methods~\cite{BengioWG10,journals/informaticaSI/MadzarovGC09, weston13} (\cite{BengioWG10} was later improved in~\cite{DengSBL11}), ii) sparse output coding~\cite{conf/cvpr/ZhaoX13}, iii) variants of error correcting output codes~\cite{DBLP:journals/corr/abs-0902-1284}, iv) variants of iterative least-squares~\cite{DBLP:journals/corr/AgarwalKKSV13}, v) a method based on guess-averse loss functions~\cite{icml2014c2_beijbom14}, and vi) classification trees~\cite{BeygelzimerLR09,NIPS2015_5937,JL2016} (that includes the Conditional Probability Trees~\cite{BeygelzimerLLSS09}  when extended to the classification setting). 

The recently proposed LOM tree algorithm~\cite{NIPS2015_5937} differs significantly from other similar hierarchical approaches, like for example Filter Trees~\cite{BeygelzimerLR09} or random trees~\cite{Breiman:2001:RF:570181.570182}, in that it addresses the problem of learning good-quality binary node partitions. The method results in low-entropy trees and instead of using an inefficient enumerate-and-test approach, see e.g:~\cite{ig}, to find a good partition or expensive brute-force optimization~\cite{Manik}, it searches the space of all possible partitions with SGD~\cite{bottou-98x}. Another work~\cite{JL2016} uses a binary tree to map an example to a small subset of candidate labels and makes a final prediction via a more tractable one-against-all classifier, where this subset is identified with the proposed Recall Tree. A notable approach based on decision trees also include 
FastXML~\cite{Prabhu2014} (and its slower and less accurate at prediction predecessor~\cite{Manik}). It is based on optimizing the rank-sensitive loss function and shows an advantage over some other ranking and NLP-based techniques in the context of multi-label classification. Other related approaches include the SLEEC classifier~\cite{NIPS2015_5969} for extreme multi-label classification that learns embeddings which preserve pairwise distances between only the nearest label vectors and ranking approaches based on negative sampling~\cite{37180}. Another tree approach~\cite{export:255952} shows no computational speed up but leads to significant improvements in prediction accuracy. 

Conditional density estimation can also be challenging in settings where the label space is large. The underlying problem here consists in learning a probability distribution over a set of random variables given some context. For example, in the language modeling setting one can learn the probability of a word given the previous text, either by making a Markov assumption and approximating the left context by the last few words seen (n-grams e.g.~\cite{JelMer80,Katz1987}, feed-forward neural language models ~\cite{Mnih12afast, export:175560, conf/icassp/SchwenkG02}), or by attempting to learn a low-dimensional representation of the full history (RNNs~\cite{conf/interspeech/MikolovKBCK10, DBLP:journals/corr/MirowskiV15, DBLP:journals/corr/TaiSM15, DBLP:journals/corr/KumarISBEPOGS15}). Both the recurrent and feed-forward Neural Probabilistic Language Models (NPLM)~\cite{Bengio:2003:NPL:944919.944966} simultaneously learn a distributed representation for words and the probability function for word sequences, expressed in terms of these representations. The major drawback of these models is that they can be slow to train, as they grow linearly with the vocabulary size (anywhere between 10,000 and 1M words), which can make them difficult to apply~\cite{Mnih12afast}. A number of methods have been proposed to overcome this difficulty. Works such as LBL~\cite{mnih2007three} or Word2Vec~\cite{mikolov2013distributed} reduce the model to its barest bones, with only one hidden layer and no non-linearities. Another proposed approach has been to only compute the NPLM probabilities for a reduced vocabulary size, and use hybrid neural-$n$-gram model~\cite{Schwenk:2005:TNN:1220575.1220601} at prediction time. Other avenues to reduce the cost of computing gradients for large vocabularies include using different sampling techniques to approximate it~\cite{Bengio+Senecal-2003,journals/tnn/BengioS08,Mnih12afast}, replacing the likelihood objective by a contrastive one~\cite{gutmann2010noise} or spherical loss~\cite{DBLP:journals/corr/BrebissonV15}, relying on self-normalizing models~\cite{DBLP:conf/naacl/AndreasK15}, taking advantage of data sparsity ~\cite{DBLP:conf/nips/VincentBB15}, or using clustering-based methods~\cite{DBLP:journals/corr/GraveJCGJ16}. It should be noted however that most of these techniques (to the exception of \cite{DBLP:journals/corr/GraveJCGJ16}) do not provide any speed up at test time. 

Similarly to the classification case, there have also been a significant number of works that use tree structured models to accelerate computation of the likelihood and gradients~\cite{morin2005hierarchical, NIPS2008_3583, conf/www/DjuricWRGB15, mikolov2013distributed}. These use various heuristics to build a hierarchy, from using ontologies~\cite{morin2005hierarchical} to Huffman coding~\cite{mikolov2013distributed}. One algorithm which endeavors to learn a binary tree structure along with the representation is presented in ~\cite{NIPS2008_3583}. They iteratively learn word representations given a fixed tree structure, and use a criterion that trades off between making a balanced tree and clustering the words based on their current embedding. The application we present in the second part of our paper is most closely related to the latter work, and uses a similar embedding of the context. However, where their setting is limited to binary trees, we work with arbitrary width, and provide a tree building objective which is both less computationally costly and comes with theoretical guarantees.

\section{Background}
\label{sec:bg}

In this section, we define the classification and log-likelihood objectives we wish to maximize. Let $\mathcal{X}$ be an input space, and $\mathcal{V}$ a label space. Let $\mathcal{P}$ be a joint distribution over samples in $(\mathcal{X}, \mathcal{V})$, and let $f_{\Theta}: \mathcal{X} \rightarrow \mathbb{R}^{d_r}$ be a function mapping every input $x \in \mathcal{X}$ to a representation $\repr \in \mathbb{R}^{d_r}$, and parametrized by $\Theta$ (e.g. as a neural network).

 We consider two objectives. Let $g$ be a function that takes an input representation $\repr \in \mathbb{R}^{d_r}$, and predicts for it a label ${g(\repr) \in \mathcal{V}}$. The classification objective is defined as the expected proportion of correctly classified examples:
\vspace{-0.05in}
\begin{equation}
\mathcal{O^{\text{class}}}(\Theta, g) = \expct_{(x, y) \sim \mathcal{P}} \Big[ \indic [g \circ f_{\Theta}(x) = y] \Big]
\label{equ:class_obj}
\vspace{-0.05in}
\end{equation}
Now, let $p_\theta(\cdot | \repr)$ define a conditional probability distribution (parametrized by $\theta$) over $\mathcal{V}$ for any $\repr \in \mathbb{R}^{d_r}$. The density estimation task consists in maximizing the expected log-likelihood of samples from $(\mathcal{X}, \mathcal{V})$:
\vspace{-0.05in}
\begin{equation}
\mathcal{O^{\text{ll}}}(\Theta, \theta) = \expct_{(x, y) \sim \mathcal{P}} \Big[ \log p_\theta(y | f_{\Theta}(x)) \Big]
\label{equ:density_obj}
\end{equation}


\paragraph{Tree-Structured Classification and Density Estimation}
\label{sec:obj}

Let us now show how to express the objectives in Equations \ref{equ:class_obj} and \ref{equ:density_obj} when using tree-structured prediction functions (with fixed structure) as illustrated in Figure~\ref{fig:background_tree}.

\begin{figure}[htp!]
\vspace{-0.1in}
\centering
\includegraphics[width=0.45\textwidth]{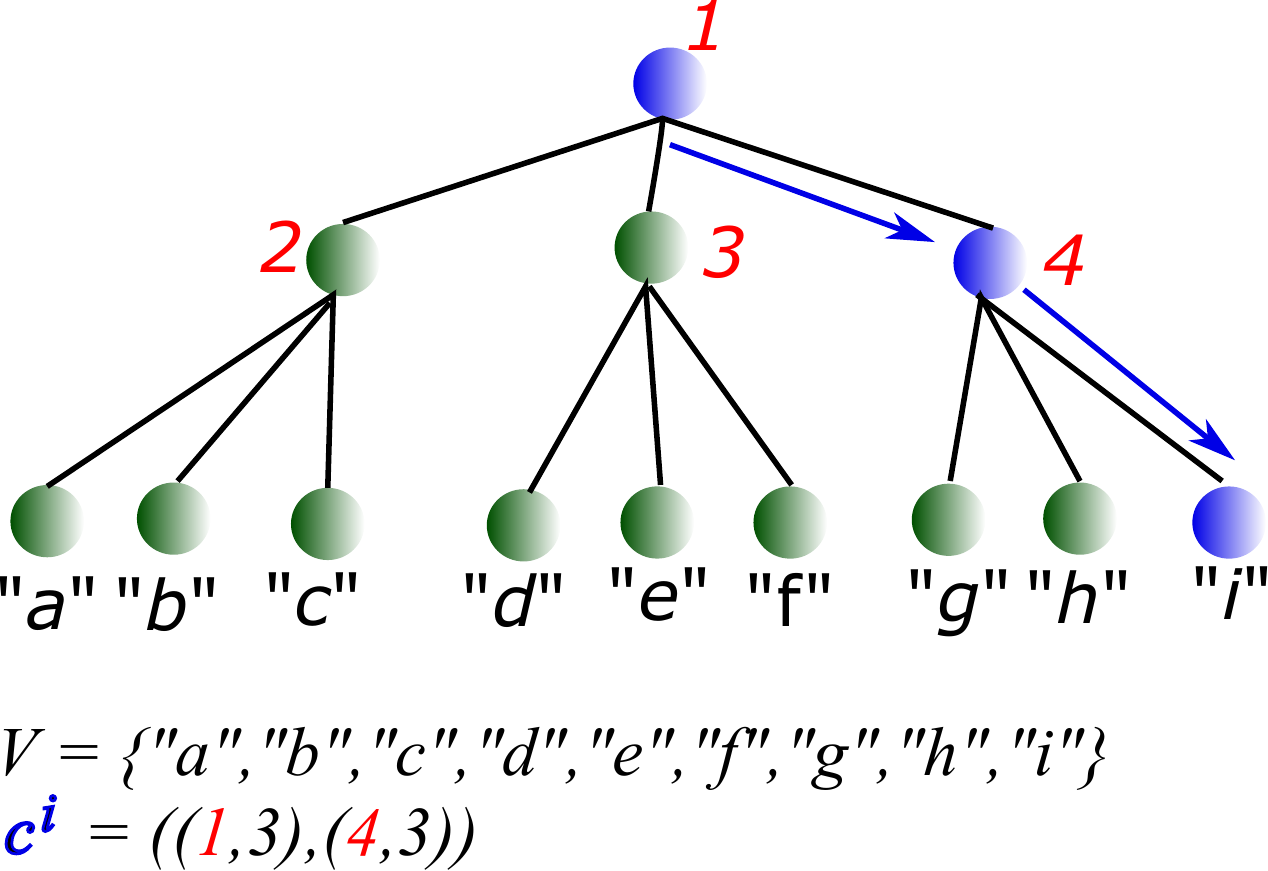}
\vspace{-0.1in}
\caption{Hierarchical predictor: in order to predict label ``$i$'', the system needs to choose the third child of node $1$, then the third child of node $4$.}
\label{fig:background_tree}
\end{figure}

Consider a tree $\mathcal{T}$ of depth $D$ and arity $\arity$ with $\nlabels = |\mathcal{V}|$ leaf nodes and $\nnodes$ internal nodes. Each leaf $l$ corresponds to a label, and can be identified with the path $\mathbf{c}^l$ from the root to the leaf. In the rest of the paper, we will use the following notations:
\vspace{-0.05in}
\begin{equation}
\mathbf{c}^l = ((c^l_{1, 1}, c^l_{1, 2}), \ldots, (c^l_{d, 1}, c^l_{d, 2}), \ldots, (c^l_{D, 1}, c^l_{D, 2})),
\label{eq:path}
\vspace{-0.05in}
\end{equation}
where $c^l_{d, 1} \in [1, \nnodes]$ correspond to the node index at depth $d$, and $c^l_{d, 2} \in [1, \arity]$ indicates which child of $c^l_{d, 1}$ is next in the path. In that case, our classification and density estimation problems are reduced to choosing the right child of a node or defining a probability distribution over children given $x \in \mathcal{X}$ respectively.

We then need to replace $g$ and $p_{\theta}$ with node decision functions $(g_n)_{n=1}^{\nnodes}$ and conditional probability distributions  $(p_{\theta_n})_{n=1}^{\nnodes}$ respectively. Given such a tree and representation function, our objective functions then become:
\vspace{-0.1in}
\begin{equation}
\hspace{-0.03in}\mathcal{O^{\text{class}}}(\Theta, g) = \expct_{(x, y) \sim \mathcal{P}} \Big[ \prod_{d=1}^D \indic [g_{c^l_{d, 1}} \circ f_{\Theta}(x) = c^l_{d, 2}] \Big]\hspace{-0.04in}
\label{eq:class_obj_tree}
\end{equation}
\vspace{-0.1in}
\begin{equation}
\hspace{-0.03in}\mathcal{O^{\text{ll}}}(\Theta, \theta) = \expct_{(x, y) \sim \mathcal{P}} \Big[\sum_{d=1}^D \log p_{\theta_{c^l_{d, 1}}}(c^l_{d, 2} | f_{\Theta}(x)) \Big]
\label{eq:density_obj_tree}
\end{equation}
\vspace{-0.15in}

The tree objectives defined in Equations \ref{eq:class_obj_tree} and \ref{eq:density_obj_tree} can be optimized in the space of parameters of the representation and node functions using standard gradient ascent methods. However, they also implicitly depend on the tree structure $\mathcal{T}$. In the rest of the paper, we provide a surrogate objective function which determines the structure of the tree and, as we show theoretically (Section \ref{sec:classlm}), maximizes the criterion in Equation \ref{eq:class_obj_tree} and, as we show empirically (Sections \ref{sec:application} and \ref{sec:experiments}), maximizes the criterion in Equation \ref{eq:density_obj_tree}.

\section{Learning Tree-Structured Objectives}
\label{sec:algorithm}

In this section, we introduce a per-node objective $J_n$ which leads to good quality trees when maximized, and provide an algorithm to optimize it.

\subsection{Objective function}

We define the node objective $J_n$ for node $n$ as:
 \vspace{-0.06in}
\begin{equation}
J_n = \frac{2}{\arity}\sum_{i=1}^{\nlabels}\propor^{(n)}_i \sum_{j=1}^{\arity}|p^{(n)}_j - p^{(n)}_{j|i}|,
\label{eq:objective}
 \vspace{-0.14in}
\end{equation}
where $\propor^{(n)}_i$ denotes the proportion of nodes reaching node $n$ that are of class $i$, $p^{(n)}_{j|i}$ is the probability that an example of class $i$ reaching $n$ will be sent to its $j^{\text{th}}$ child, and  $p^{(n)}_{j}$ is the probability that an example of any class reaching $n$ will be sent to its $j^{\text{th}}$ child. Note that we have:
 \vspace{-0.1in}
\begin{equation}
\forall j \in [1, \arity], \;\; p^{(n)}_j = \sum_{i=1}^{\nlabels} \propor^{(n)}_i p^{(n)}_{j|i}.
 \vspace{-0.1in}
\end{equation}
The objective in Equation~\ref{eq:objective} reduces to the LOM tree objective in the case of $\arity=2$.

At a high level, maximizing the objective encourages the conditional distribution for each class to be as different as possible from the global one; so the node decision function needs to be able to discriminate between examples of the different classes. The objective thus favors balanced and pure node splits. To wit, we call a split at node $n$ \textit{perfectly balanced} when the global distribution $p_{\cdot}^{(n)}$ is uniform, and \textit{perfectly pure} when each $p^{(n)}_{\cdot|i}$ takes value either $0$ or $1$, as all data points from the same class reaching node $n$ are sent to the same child.

In Section~\ref{sec:classlm} we discuss the theoretical properties of this objective in details. We show that maximizing it leads to perfectly balanced and perfectly pure splits. We also derive the boosting theorem that shows the number of internal nodes that the tree needs to have to reduce the classification error below any arbitrary threshold, under the assumption that the objective is ``weakly'' optimized in each node of the tree.

\begin{remark} 
In the rest of the paper, we use node functions $g_n$ which take as input a data representation $\repr \in \mathbb{R}^{d_r}$ and output a distribution over children of $n$ (for example using a soft-max function). When used in the classification setting, $g_n$ sends the data point to the child with the highest predicted probability. With this notation, and representation function $f_{\Theta}$, we can write:
\begin{equation}
p_j^{(n)} \coloneqq \mathbb{E}_{(x, y) \sim \mathcal{P}}[g_n \circ f_{\Theta}(x)] 
\label{eq:balprob}
\end{equation}
and
\begin{equation}
p_{j|i}^{(n)} \coloneqq \mathbb{E}_{(x, y) \sim \mathcal{P}}[g_n \circ f_{\Theta}(x) | y=i].
\label{eq:pureprob}
\end{equation}
An intuitive geometric interpretation of probabilities $p_j^{(n)}$ and $p_{j|i}^{(n)}$ can be found in the Supplementary material.
\end{remark}


\begin{algorithm}[t!]
\caption{Tree Learning Algorithm}
\label{algo:learn}
\begin{tabular}{l}
\textbf{Input} Input representation function: $f$ with parameters\\
\hspace{0.33in} $\Theta_f$. Node decisions functions $(g_n)_{n=1}^\nlabels$ with\\
\hspace{0.33in} parameters $(\Theta_n)_{n=1}^\nlabels$. Gradient step size $\epsilon$.\\
\textbf{Ouput} Learned $\arity$-ary tree, parameters $\Theta_f$ and $(\Theta_n)_{n=1}^\nlabels$.\\
\\
\textbf{procedure} \textsf{InitializeNodeStats} ()\\
\hspace{0.2in}\textbf{for} $n=1$ to $\nnodes$ \textbf{do}\\
\hspace{0.4in}\textbf{for} $i=1$ to $\nlabels$ \textbf{do}\\
\hspace{0.6in}$\text{SumProbas}_{n, i} \leftarrow \mathbf{0}$\\
\hspace{0.6in}$\text{Counts}_{n, i} \leftarrow 0$\\
\\
\textbf{procedure} \textsf{NodeCompute} ($\mathbf{w}$, $n$, $i$, target)\\
\hspace{0.2in}$\mathbf{p} \leftarrow g_{n}(\mathbf{w})$\\
\hspace{0.2in}$\text{SumProbas}_{n, i} \leftarrow \text{SumProbas}_{n, i} + \mathbf{p}$\\
\hspace{0.2in}$\text{Counts}_{n, i} \leftarrow \text{Counts}_{n, i} + 1$\\
\hspace{0.2in}// {\sl{Gradient step in the node parameters}}\\
\hspace{0.2in}$\Theta_{n} \leftarrow \Theta_{n} + \epsilon \frac{\partial p_{\text{target}}}{\partial \Theta_{n}}$ \label{algo:line:grad_1}\\
\hspace{0.2in} \textbf{return} $\frac{\partial p_{\text{target}}}{\partial \mathbf{w}}$ \label{algo:line:grad_2}\\
\\
\textsf{InitializeNodeStats} ()\\
\textbf{for} Each batch $b$ \textbf{do}\\
\hspace{0.2in}// {\sl{\textsf{AssignLabels} () re-builds the tree based on the}}\\
\hspace{0.2in}// {\sl{current statistics}}\\
\hspace{0.2in} \textsf{AssignLabels} ($\{1, \ldots, \nlabels\}$, root)\\
\hspace{0.2in} \textbf{for} each example $(\mathbf{x}, i)$ in $b$ \textbf{do}\\
\hspace{0.4in} Compute input representation $\mathbf{w} = f(\mathbf{x})$\\
\hspace{0.4in} $\Delta \mathbf{w} \leftarrow \mathbf{0}$\\
\hspace{0.4in} \textbf{for} $d=1$ to $D$ \textbf{do}\\
\hspace{0.6in} Set node id and target: $(n, j) \leftarrow c^i_{d}$\\
\hspace{0.6in} $\Delta \mathbf{w} \leftarrow \Delta \mathbf{w}$ + \textsf{NodeCompute} ($\mathbf{w}$, n, i, j) \label{algo:line:grad_node}\\
\\
\hspace{0.4in} // {\sl{Gradient step in the parameters of $f$}}\\
\hspace{0.4in} $\Theta_f \leftarrow \Theta_f + \epsilon \frac{\partial f}{\partial \Theta_f} \Delta \mathbf{w}$ \label{algo:line:grad_rep}
\end{tabular}
\end{algorithm}

\subsection{Algorithm}

In this section we present an algorithm for simultaneously building the classification tree and  learning the data representation. We aim at maximizing the accuracy of the tree as defined in Equation~\ref{eq:class_obj_tree} by maximizing the objective $J_n$ of Equation~\ref{eq:objective} at each node of the tree (the boosting theorem that will be presented in Section~\ref{sec:classlm} shows the connection between the two).

\begin{algorithm}[t!]
\caption{Label Assignment Algorithm}
\label{algo:assign}
\begin{tabular}{l}
\textbf{Input} labels currently reaching the node\\
\hspace{0.33in} node ID $n$\\
\textbf{Ouput}  Lists of labels now assigned to the node's children\\
\\
\textbf{procedure} \textsf{CheckFull} (full, assigned, count, $j$)\\
\hspace{0.2in} \textbf{if} $|\text{assigned}_{j}|  \equiv 2 \mod (M-1)$ \textbf{then}\\
\hspace{0.4in} count $\leftarrow \text{count} - (M-1)$\\
\hspace{0.2in} \textbf{if} $\text{count} = 0$ \textbf{then}\\
\hspace{0.4in} $\text{full} \leftarrow \text{full} \cup \{j\} $\\
\hspace{0.2in} \textbf{if} $\text{count} = 1$ \textbf{then}\\
\hspace{0.4in} count $\leftarrow 0$\\
\hspace{0.4in} \textbf{for} $j'$ s.t. $|\text{assigned}_{j'}|  \equiv 1 \mod (M-1)$ \textbf{do}\\
\hspace{0.6in} $\text{full} \leftarrow \text{full} \cup \{j'\} $\\
\\
\textbf{procedure} \textsf{AssignLabels} (labels, $n$)\\
\hspace{0.2in} // {\sl{first, compute $p_j^{(n)}$ and $p_{j|i}^{(n)}$.}}\\
\hspace{0.2in} $\mathbf{p}^{avg}_0 \leftarrow \mathbf{0}$\\
\hspace{0.2in} $\text{count} \leftarrow 0$\\
\hspace{0.2in} \textbf{for} $i$ in labels \textbf{do}\\
\hspace{0.4in} $\mathbf{p}^{avg}_0 \leftarrow \mathbf{p}^{avg}_0 + \text{SumProbas}_{n, i}$\\
\hspace{0.4in} $\text{count} \leftarrow \text{count} + \text{Counts}_{n, i}$\\
\hspace{0.4in} $\mathbf{p}^{avg}_i \leftarrow \text{SumProbas}_{n, i} / \text{Counts}_{n, i}$\\
\hspace{0.2in} $\mathbf{p}^{avg}_0 \leftarrow \mathbf{p}^{avg}_0 / \text{count}$\\
\hspace{0.2in} // {\sl{then, assign each label to a child of $n$}}\\
\hspace{0.2in} unassigned $\leftarrow$ labels\\
\hspace{0.2in} full $\leftarrow \emptyset$\\
\hspace{0.2in} count $\leftarrow (|\text{unassigned}| - (M - 1))$\\
\hspace{0.2in} \textbf{for} $j=1$ to $\arity$ \textbf{do}\\
\hspace{0.4in} $\text{assigned}_j \leftarrow \emptyset$\\
\hspace{0.2in} \textbf{while} $\text{unassigned} \neq \emptyset$ \textbf{do}\\
\hspace{0.4in} \Big{/}\!\!\Big{/}{\sl{$\frac{\partial J_n}{ \partial p^{(n)}_{j|i}}$ is given in Equation \ref{eq:gradients}}}\\
\hspace{0.4in} $(i^*, j^*) \leftarrow \operatorname*{argmax}\limits_{i \in \text{unassigned}, j \not \in \text{full}}\left(\frac{\partial J_n}{ \partial p^{(n)}_{j|i}}\right)$ \label{algo:line:grad_sort}\\
\hspace{0.4in} \textbf{if} $n=\text{root}$ \textbf{then} \\
\hspace{0.6in} $\mathbf{c}^{i^*} \leftarrow (n, j^*)$\\
\hspace{0.4in} \textbf{else} \\
\hspace{0.6in} $\mathbf{c}^{i^*} \leftarrow (\mathbf{c}^{i^*}, (n, j^*))$\\
\hspace{0.4in} $\text{assigned}_{j^*} \leftarrow \text{assigned}_{j^*} \cup \{ i^* \}$\\
\hspace{0.4in} $\text{unassigned} \leftarrow \text{unassigned} \setminus \{ i^* \}$\\
\hspace{0.4in} \textsf{CheckFull} (full, assigned, count, $j^*$)\\
\hspace{0.2in} \textbf{for} $j=1$ to $\arity$ \textbf{do}\\
\hspace{0.4in} \textsf{AssignLabels} ($\text{assigned}_j$, $\text{child}_{n, j}$, $d + 1$)\\
\hspace{0.2in} \textbf{return} assigned
\end{tabular}
\end{algorithm}
\setlength{\textfloatsep}{15pt}

Let us now show how we can efficiently optimize $J_n$. The gradient of $J_n$ with respect to the conditional probability distributions is (see proof of Lemma \ref{lem:subgrad} in the Supplement):
\vspace{-0.05in}
\begin{equation}
\frac{\partial J_n}{ \partial p^{(n)}_{j|i}} = \frac{2}{\arity} \propor^{(n)}_i (1 - \propor^{(n)}_i) \sign(p^{(n)}_{j|i} - p^{(n)}_j).
\label{eq:gradients}
 \vspace{-0.15in}
\end{equation}

Then, according to Equation \ref{eq:gradients}, increasing the likelihood of sending label $i$ to any child $j$ of $n$ such that $p^{(n)}_{j|i} > p^{(n)}_j$ increases the objective $J_n$. Note that we only need to consider the labels $i$ for which $\propor^{(n)}_i>0$, that is, labels $i$ which reach node $n$ in the current tree.

We also want to make sure that we have a well-formed $\arity$-ary tree at each step, which means that the number of labels assigned to any node is always congruent to $1$ modulo $(M - 1)$. Algorithm \ref{algo:assign} provides such an assignment by greedily choosing the label-child pair $(i, j)$ such that $j$ still has room for labels with the highest value of $\frac{\partial J_n}{ \partial p^{(n)}_{j|i}}$.

The global procedure, described in Algorithm \ref{algo:learn}, is then the following.
\begin{itemize}
\vspace{-0.15in}
\item At the start of each batch, re-assign targets for each node prediction function, starting from the root and going down the tree. At each node, each label is more likely to be re-assigned to the child it has had most affinity with in the past (Algorithm \ref{algo:assign}). This can be seen as a form of hierarchical on-line clustering.
\vspace{-0.05in}
\item Every example now has a unique path depending on its label. For each sample, we then take a gradient step at each node along the assigned path (see Algorithm \ref{algo:learn}).
\end{itemize}

\begin{lemma}
Algorithm~\ref{algo:assign} finds the assignment of nodes to children for a fixed depth tree which most increases $J_n$ under well-formedness constraints.
\label{lem:subgrad}
\end{lemma}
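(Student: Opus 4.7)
The plan is to reduce the combinatorial step that \textsf{AssignLabels} performs at node $n$ to maximising a linear function over a matroid, and then invoke the standard greedy-on-a-matroid (Rado--Edmonds) theorem.

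First I would make ``most increases $J_n$'' precise as maximising the first-order change in $J_n$ induced by the assignment $\sigma$, since after \textsf{AssignLabels} the subsequent gradient steps in $\Theta_n$ push each $p^{(n)}_{\sigma(i)|i}$ upward toward $1$. Dropping $\sigma$-independent terms, the relevant linearisation is
\[
S(\sigma)\;=\;\sum_{i} \frac{\partial J_n}{\partial p^{(n)}_{\sigma(i)|i}}\;=\;\sum_{i} \alpha_i\, s_{i,\sigma(i)},
\]
where by Equation~\ref{eq:gradients}, $\alpha_i := \tfrac{2}{\arity}\propor^{(n)}_i(1-\propor^{(n)}_i) \geq 0$ depends only on $i$ and $s_{i,j} := \sign(p^{(n)}_{j|i}-p^{(n)}_j)\in\{-1,+1\}$. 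Writing $J_i^+ := \{j : s_{i,j}=+1\}$ and $T(\sigma) := \{i : \sigma(i) \in J_i^+\}$, and observing that $|s_{i,\sigma(i)}| \equiv 1$, maximising $S(\sigma)$ is equivalent, up to a $\sigma$-independent additive constant, to maximising $\sum_{i\in T(\sigma)} \alpha_i$.

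I would then identify the feasible sets. Given the child capacities $(c_j)_{j=1}^\arity$ dictated by the well-formedness constraint at node $n$, the family of label subsets $T$ that admit a routing into their positive children while respecting these capacities is exactly the independent sets of the transversal matroid $\mathcal M$ of the bipartite graph whose right vertices are $c_j$ copies of each child $j$ and whose edges are $\{(i,j) : j\in J_i^+\}$. Since $\sum_j c_j$ equals the number of labels reaching $n$, every independent $T$ extends to a full assignment by placing the remaining labels in the leftover slots, and their contribution to $S$ does not depend on which non-positive child receives them. The outer loop of \textsf{AssignLabels} is then precisely the weighted matroid greedy on $\mathcal M$: at every iteration it selects the pair $(i^*,j^*)$ maximising $\partial J_n/\partial p^{(n)}_{j|i}$, which --- because positive pairs have value $+\alpha_i$ and negative pairs $-\alpha_i$ --- is a positive pair $j^*\in J_{i^*}^+$ with remaining capacity and the largest available $\alpha_{i^*}$ whenever any such pair is feasible, and falls back to a negative pair only afterwards. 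The Rado--Edmonds theorem then guarantees that the $T$ produced is of maximum weight in $\mathcal M$, so $S(\sigma)$ is maximal and $J_n$ is increased as much as the well-formedness constraint allows.

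The main technical obstacle is verifying that \textsf{CheckFull} correctly implements the capacity structure of $\mathcal M$ under the particular well-formedness rule ``$|{\rm assigned}_j|\equiv 1 \pmod{\arity-1}$'' --- i.e.\ that a child $j$ enters \textsf{full} exactly when assigning one more label to it would prevent a well-formed completion of the remaining unassigned labels. I would prove this by induction on the number of labels already placed, using the arithmetic fact that placing an extra label in $j$ once $|{\rm assigned}_j|\equiv 2 \pmod{\arity-1}$ consumes $\arity-1$ slots elsewhere, which is precisely what the update ``count $\leftarrow$ count $-(\arity-1)$'' records; and that when count reaches $0$ or $1$, the residues of the $|{\rm assigned}_j|$ modulo $\arity-1$ determine uniquely which children can still accept labels. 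Once this book-keeping is justified, the transversal-matroid interpretation is exact and the greedy argument above delivers the lemma.
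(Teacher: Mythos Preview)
Your linearisation and the factorisation $\partial J_n/\partial p^{(n)}_{j|i}=\alpha_i\, s_{i,j}$ with $\alpha_i\ge 0$ independent of $j$ are correct, and reducing the problem to maximising $\sum_{i\in T(\sigma)}\alpha_i$ is a cleaner statement than the paper gives. The transversal-matroid framing is also natural. The gap is in the identification of \textsf{AssignLabels} with the matroid greedy. Rado--Edmonds applies to the greedy that processes \emph{labels} in weight order and accepts $i$ whenever $T\cup\{i\}$ remains independent; for a transversal matroid this means some matching of all of $T\cup\{i\}$ into their positive children exists, possibly after re-routing previously placed labels along an augmenting path. Algorithm~\ref{algo:assign} does not do this: it commits $i^*$ to a particular child $j^*$ immediately and never revisits that choice. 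When the current top label $i$ has several non-full positive children, the algorithm's arbitrary tie-break among them can fill the unique positive child of some later, lighter label $i'$, forcing $i'$ negative even though $\{i,i'\}$ is independent in your $\mathcal M$. A three-child instance with $J_i^+=\{1,2\}$, $J_{i'}^+=\{1\}$, $\alpha_i>\alpha_{i'}$ and unit capacity on child~$1$ exhibits this if the greedy happens to pick $(i,1)$ first. So the sentence ``the outer loop of \textsf{AssignLabels} is then precisely the weighted matroid greedy on $\mathcal M$'' is where the argument breaks.

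For comparison, the paper's own proof does not use matroids at all; it argues by a direct pairwise exchange. After deriving the same gradient formula you use, it takes any two labels $i,i'$ assigned by the algorithm to children $j,j'$ and claims that the greedy order yields $\partial J_n/\partial p^{(n)}_{j|i}\ge\partial J_n/\partial p^{(n)}_{j|i'}$ and $\partial J_n/\partial p^{(n)}_{j'|i'}\ge\partial J_n/\partial p^{(n)}_{j'|i}$, whence swapping the two cannot raise the linear objective. This is shorter than your route, but it implicitly assumes both cross-pairs $(i',j)$ and $(i,j')$ were still available at the relevant greedy steps, and so likewise does not cover the tie-breaking case above. Your approach is more systematic and makes clearer what structural property is really needed; what is missing is either an augmenting-path argument showing the committed partial matching never blocks an independent extension, or an explicit tie-breaking rule. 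Separately, the well-formedness constraint enforced by \textsf{CheckFull} is a congruence on the part sizes rather than a fixed capacity vector $(c_j)$, so the feasible region is a union of capacity profiles rather than a single transversal matroid; your inductive sketch for \textsf{CheckFull} would need to address this, not just track one capacity vector.
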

\begin{remark}
An interesting feature of the algorithm, is that since the representation of examples from different classes are learned together, there is intuitively less of a risk of getting stuck in a specific tree configuration. More specifically, if two similar classes are initially assigned to different children of a node, the algorithm is less likely to keep this initial decision since the representations for examples of both classes will be pulled together in other nodes.
\end{remark}
Next, we provide a theoretical analysis of the objective introduced in Equation~\ref{eq:objective}. Proofs are deferred to the Supplementary material.

\section{Theoretical Results}
\label{sec:classlm}

In this section, we first analyze theoretical properties of the objective $J_n$ as regards node quality, then prove a boosting statement for the global tree accuracy.

\subsection{Properties of the objective function}

We start by showing that maximizing $J_n$ in every node of the tree leads to high-quality nodes, i.e. perfectly balanced and perfectly pure node splits. Let us first introduce some formal definitions.
\begin{definition}[Balancedness factor]
The split in node $n$ of the tree is $\beta^{(n)}$-balanced if
\vspace{-0.05in}
\[\beta^{(n)} \leq \min_{j = \{1,2,\dots,M\}}p_j^{(n)},
\]
\vspace{-0.05in}
where $\beta^{(n)} \in (0,\frac{1}{\arity}]$ is a balancedness factor.
\label{def:bal}
\end{definition}
A split is perfectly balanced if and only if $\beta^{(n)} = \frac{1}{\arity}$.
\begin{definition}[Purity factor]
The split in node $n$ of the tree is $\alpha^{(n)}$-pure if
\vspace{-0.05in}
\[\frac{1}{\arity}\sum_{j=1}^\arity\sum_{i=1}^\nlabels \propor_i^{(n)}\min\left(p_{j|i}^{(n)},1-p_{j|i}^{(n)}\right) \leq \alpha^{(n)},
\]
\vspace{-0.05in}
where $\alpha^{(n)} \in [0,\frac{1}{\arity})$ is a purity factor.
\label{def:purity}
\end{definition}
A split is perfectly pure if and only if $\alpha^{(n)} = 0$.

The following lemmas characterize the range of the objective $J_n$ and link it to the notions of balancedness and purity of the split.

\begin{lemma}
The objective function $J_n$ lies in the interval $\left[0,\frac{4}{\arity}\left(1 - \frac{1}{\arity}\right)\right]$.
\label{lem:avgscorerange}
\end{lemma}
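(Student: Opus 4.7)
The lower bound is immediate: every summand in the definition of $J_n$ is a product of a probability $\propor_i^{(n)}$ and a non-negative absolute value, so $J_n \geq 0$. The work is in the upper bound.

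My plan is to first swap the order of summation in Equation \ref{eq:objective} and bound the inner sum $S(j) := \sum_i \propor_i^{(n)} |p_j^{(n)} - p_{j|i}^{(n)}|$ for each fixed child $j$. The key observation is that $\sum_i \propor_i^{(n)} (p_{j|i}^{(n)} - p_j^{(n)}) = 0$, since $p_j^{(n)} = \sum_i \propor_i^{(n)} p_{j|i}^{(n)}$. Splitting the classes by the sign of $p_{j|i}^{(n)} - p_j^{(n)}$ into sets $A_j^+$ and $A_j^-$, this identity forces the positive and negative contributions to cancel, so $S(j) = 2 T_j$ where $T_j = \sum_{i \in A_j^+} \propor_i^{(n)} (p_{j|i}^{(n)} - p_j^{(n)})$.

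Next I want to show $T_j \leq p_j^{(n)} (1 - p_j^{(n)})$. Write $\sigma_j = \sum_{i \in A_j^+} \propor_i^{(n)} \in [0,1]$. Bounding $p_{j|i}^{(n)} \leq 1$ on $A_j^+$ gives $T_j \leq \sigma_j (1 - p_j^{(n)})$, and bounding $p_{j|i}^{(n)} \geq 0$ on $A_j^-$ (after rewriting $T_j$ as the equivalent sum over $A_j^-$) gives $T_j \leq (1-\sigma_j) p_j^{(n)}$. The minimum of these two bounds, viewed as functions of $\sigma_j$, is maximized precisely at $\sigma_j = p_j^{(n)}$, where both take the common value $p_j^{(n)}(1 - p_j^{(n)})$. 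Hence $T_j \leq p_j^{(n)}(1 - p_j^{(n)})$ regardless of $\sigma_j$.

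Combining these bounds yields
\begin{equation*}
J_n \;=\; \frac{2}{\arity}\sum_{j=1}^{\arity} S(j) \;\leq\; \frac{4}{\arity}\sum_{j=1}^{\arity} p_j^{(n)}(1 - p_j^{(n)}) \;=\; \frac{4}{\arity}\Bigl(1 - \sum_{j=1}^{\arity} (p_j^{(n)})^2\Bigr).
\end{equation*}
Finally, I apply Cauchy--Schwarz (equivalently, Jensen on $x \mapsto x^2$) to the distribution $(p_j^{(n)})_{j=1}^\arity$, which gives $\sum_j (p_j^{(n)})^2 \geq \frac{1}{\arity}(\sum_j p_j^{(n)})^2 = \frac{1}{\arity}$. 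Substituting yields $J_n \leq \frac{4}{\arity}(1 - \frac{1}{\arity})$, as claimed. The only step with any subtlety is the bound $T_j \leq p_j^{(n)}(1-p_j^{(n)})$: the naive bound $T_j \leq \min(p_j^{(n)}, 1 - p_j^{(n)})$ is too weak and does not close the Cauchy--Schwarz gap, so the slight refinement that exploits $\sigma_j$ on both sides is essential to match the claimed constant.
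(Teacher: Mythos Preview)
Your proof is correct and follows the same overall arc as the paper's: bound the per-child contribution $S(j)=\sum_i \propor_i^{(n)}|p_j^{(n)}-p_{j|i}^{(n)}|$ by $2\,p_j^{(n)}(1-p_j^{(n)})$, sum over $j$ to obtain $J_n\le \frac{4}{\arity}\bigl(1-\sum_j (p_j^{(n)})^2\bigr)$, and finish with Jensen on $x\mapsto x^2$.

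The only genuine difference is in how the per-child bound is obtained. The paper first asserts that $J_n$ is maximized when every $p_{j|i}^{(n)}\in\{0,1\}$, defines $O_j=\{i:p_{j|i}^{(n)}=1\}$, and then computes directly that at such extremal configurations the sum equals $\frac{4}{\arity}\bigl(1-\sum_j(\sum_{i\in O_j}\propor_i^{(n)})^2\bigr)$, noting $\sum_{i\in O_j}\propor_i^{(n)}=p_j^{(n)}$. You instead work with arbitrary $p_{j|i}^{(n)}\in[0,1]$: the mean-zero identity gives $S(j)=2T_j$, and the two-sided bound $T_j\le\min\bigl(\sigma_j(1-p_j^{(n)}),\,(1-\sigma_j)p_j^{(n)}\bigr)\le p_j^{(n)}(1-p_j^{(n)})$ does the rest. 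Your route is a bit more self-contained, since the paper's ``maximized at the extremes'' step is stated without justification (one would need convexity in each $p_{j|i}^{(n)}$ together with the simplex constraint $\sum_j p_{j|i}^{(n)}=1$); your argument avoids that reduction entirely and gives the same constant.
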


Let $J^{*}$ denotes the highest possible value of $J_n$, i.e. $J^{*} = \frac{4}{\arity}\left(1 - \frac{1}{\arity}\right)$.

\begin{lemma}
The objective function $J_n$  admits the highest value, i.e. $J_n = J^{*}$, if and only if the split  in node $n$ is perfectly balanced, i.e. $\beta^{(n)} = \frac{1}{M}$, and perfectly pure, i.e. $\alpha^{(n)} = 0$.
\label{lem:avgscoremaxvalue}
\end{lemma}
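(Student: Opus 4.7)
The plan is to make the upper bound of Lemma~\ref{lem:avgscorerange} tight by tracking equality conditions throughout the derivation. For each fixed child $j$, the constraint $p_j^{(n)} = \sum_i \propor_i^{(n)} p_{j|i}^{(n)}$ means $\sum_i \propor_i^{(n)}(p_{j|i}^{(n)} - p_j^{(n)}) = 0$, so the positive and negative parts balance and
\[
\sum_i \propor_i^{(n)}\bigl|p_{j|i}^{(n)} - p_j^{(n)}\bigr| \;=\; 2\sum_i \propor_i^{(n)}\bigl(p_{j|i}^{(n)} - p_j^{(n)}\bigr)_+.
\]
I would then apply the elementary pointwise bound $(p_{j|i}^{(n)} - p_j^{(n)})_+ \leq p_{j|i}^{(n)}(1 - p_j^{(n)})$, verified by splitting on the sign of $p_{j|i}^{(n)}-p_j^{(n)}$, sum in $i$ to get $\le p_j^{(n)}(1-p_j^{(n)})$, sum in $j$, and finish with $\sum_j (p_j^{(n)})^2 \geq 1/\arity$ (Cauchy--Schwarz), recovering $J_n \le \frac{4}{\arity}(1-\tfrac{1}{\arity}) = J^*$.

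For the ``only if'' direction, I would read off the equality conditions from this chain in order. The final Cauchy--Schwarz step is tight iff $p_j^{(n)} = 1/\arity$ for every $j$, i.e.\ the split is perfectly balanced in the sense of Definition~\ref{def:bal}. Given this (which in particular ensures $0 < p_j^{(n)} < 1$ when $\arity \ge 2$), the pointwise bound is tight iff for each $(i,j)$ with $\propor_i^{(n)}>0$ we have $p_{j|i}^{(n)}\in\{0,1\}$: in the regime $p_{j|i}^{(n)} \le p_j^{(n)}$ equality requires $p_{j|i}^{(n)}(1-p_j^{(n)})=0$, hence $p_{j|i}^{(n)}=0$ since $p_j^{(n)}<1$; in the regime $p_{j|i}^{(n)} > p_j^{(n)}$ equality requires $p_j^{(n)}(1-p_{j|i}^{(n)})=0$, hence $p_{j|i}^{(n)}=1$ since $p_j^{(n)}>0$. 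This forces $\min(p_{j|i}^{(n)},1-p_{j|i}^{(n)})=0$ for all relevant pairs, i.e.\ $\alpha^{(n)}=0$ in Definition~\ref{def:purity}.

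The converse is a direct computation: perfect purity means each conditional row has a single $1$ and zeros elsewhere, so for every $i$ with $\propor_i^{(n)}>0$,
\[
\sum_{j=1}^{\arity} \bigl|p_j^{(n)} - p_{j|i}^{(n)}\bigr| \;=\; \bigl(1 - \tfrac{1}{\arity}\bigr) + (\arity-1)\tfrac{1}{\arity} \;=\; \tfrac{2(\arity-1)}{\arity}.
\]
Plugging into Equation~\ref{eq:objective} and using $\sum_i \propor_i^{(n)} = 1$ yields $J_n = \frac{4(\arity-1)}{\arity^2} = J^*$.

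The main obstacle will be the equality analysis of $(p_{j|i}^{(n)}-p_j^{(n)})_+ \le p_{j|i}^{(n)}(1-p_j^{(n)})$: I need to exclude the degenerate regimes $p_j^{(n)}\in\{0,1\}$ which would otherwise make the bound tight without pinning $p_{j|i}^{(n)}$ to $\{0,1\}$. The cleanest route is to first extract perfect balancedness from the outer Cauchy--Schwarz step, and only then invoke $0<p_j^{(n)}<1$ to read off perfect purity from the remaining slack.
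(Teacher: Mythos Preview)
Your proposal is correct and takes a genuinely different route from the paper. The paper establishes the upper bound (Lemma~\ref{lem:avgscorerange}) by first invoking coordinate-wise convexity of $J_n$ in each $p_{j|i}^{(n)}$ to argue that the maximum is attained at a vertex of the cube (hence at a pure configuration), then computes $J_n$ exactly for pure splits as $\frac{4}{\arity}\bigl[1-\sum_j(\sum_{i\in O_j}\propor_i^{(n)})^2\bigr]$, and finally applies Jensen to bound the sum of squares. For the ``only if'' direction of Lemma~\ref{lem:avgscoremaxvalue}, the paper reuses the convexity observation plus an explicit gradient computation to force purity, and then runs a separate contradiction argument (writing $p_j^{(n)}=\tfrac1\arity+x_j$) to force balancedness. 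Your argument instead packages everything into a single chain of inequalities via the pointwise bound $(p_{j|i}^{(n)}-p_j^{(n)})_+\le p_{j|i}^{(n)}(1-p_j^{(n)})$, which after summing in $i$ collapses to $p_j^{(n)}(1-p_j^{(n)})$ without any appeal to convexity or vertex arguments; the Cauchy--Schwarz step at the end coincides with the paper's Jensen step. Reading off equality conditions in reverse order---balancedness from the outer step, then purity from the inner step once $p_j^{(n)}\in(0,1)$ is secured---is cleaner than the paper's two separate arguments and sidesteps the slightly informal ``$J_n$ is convex hence maximized only at extremes'' reasoning. Both proofs handle the converse by the same direct computation.
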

We next show Lemmas~\ref{lem:isol_bal} and~\ref{lem:isol_pur} which analyze balancedness and purity of a node split in isolation, i.e. we analyze resp. balancedness and purity of a node split when resp. purity and balancedness is fixed and perfect. We show that in such isolated setting increasing $J_n$ leads to a more balanced and more pure split. 

\begin{lemma}
If a split in node $n$ is perfectly pure, then
\vspace{-0.1in}
\[\beta^{(n)} \in \left[\frac{1}{\arity} - \frac{\sqrt{M(J^{*} - J_n)}}{2},\frac{1}{\arity}\right].
\]
\label{lem:isol_bal}
\vspace{-0.15in}
\end{lemma}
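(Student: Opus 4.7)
The plan is to first exploit perfect purity to eliminate the class index from $J_n$, then rewrite the resulting expression as a weighted sum of squared deviations from uniformity, from which a bound on $\beta^{(n)}$ falls out by an extremal argument.

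First, I would use Definition~\ref{def:purity}: perfect purity ($\alpha^{(n)}=0$) forces $\min(p_{j|i}^{(n)}, 1-p_{j|i}^{(n)})=0$ for every $i$ with $\propor_i^{(n)}>0$, so each such $p_{j|i}^{(n)}\in\{0,1\}$. Thus every label $i$ reaching $n$ is routed deterministically to exactly one child; let $S_j=\{i:p_{j|i}^{(n)}=1\}$, which gives $p_j^{(n)}=\sum_{i\in S_j}\propor_i^{(n)}$. The inner absolute value in $J_n$ then becomes $|p_j^{(n)}-1|=1-p_j^{(n)}$ for $i\in S_j$ and $p_j^{(n)}$ for $i\notin S_j$. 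Summing over $i$ for a fixed $j$ yields $p_j^{(n)}(1-p_j^{(n)})+(1-p_j^{(n)})p_j^{(n)}=2p_j^{(n)}(1-p_j^{(n)})$, so
\[
J_n=\frac{4}{\arity}\sum_{j=1}^{\arity}p_j^{(n)}\bigl(1-p_j^{(n)}\bigr).
\]

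Next, using $\sum_j p_j^{(n)}=1$, I would expand $\sum_j p_j^{(n)}(1-p_j^{(n)})=1-\sum_j (p_j^{(n)})^2$, and combine with $J^{*}=\frac{4}{\arity}(1-\frac{1}{\arity})$ to obtain
\[
J^{*}-J_n=\frac{4}{\arity}\Bigl(\sum_{j=1}^{\arity}(p_j^{(n)})^2-\frac{1}{\arity}\Bigr)=\frac{4}{\arity}\sum_{j=1}^{\arity}\Bigl(p_j^{(n)}-\tfrac{1}{\arity}\Bigr)^2,
\]
where the last equality uses $\sum_j (p_j^{(n)}-\frac{1}{\arity})^2=\sum_j(p_j^{(n)})^2-\frac{1}{\arity}$ (again from $\sum_j p_j^{(n)}=1$).

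Finally, since $\beta^{(n)}=\min_j p_j^{(n)}\leq \frac{1}{\arity}$ (the minimum cannot exceed the average), the single term $j^{*}=\arg\min_j p_j^{(n)}$ already gives $\bigl(\tfrac{1}{\arity}-\beta^{(n)}\bigr)^2\leq \sum_{j=1}^{\arity}(p_j^{(n)}-\tfrac{1}{\arity})^2=\frac{\arity(J^{*}-J_n)}{4}$, so $\frac{1}{\arity}-\beta^{(n)}\leq \frac{\sqrt{\arity(J^{*}-J_n)}}{2}$. The upper bound $\beta^{(n)}\leq \frac{1}{\arity}$ is just Definition~\ref{def:bal}. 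This gives the claimed interval.

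The only step that requires any care is the reduction of $J_n$ under perfect purity: once the $\{0,1\}$ structure of $p_{j|i}^{(n)}$ is in hand and the partition $(S_j)_j$ is identified, the rest is algebra and a one-line extremal comparison, so I do not expect a real obstacle — the challenge is purely in writing the telescoping cleanly.
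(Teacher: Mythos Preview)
Your proof is correct and follows essentially the same route as the paper: reduce $J_n$ under perfect purity to $\frac{4}{\arity}\sum_j p_j^{(n)}(1-p_j^{(n)})$, rewrite $J^{*}-J_n$ as $\frac{4}{\arity}\sum_j(p_j^{(n)}-\tfrac{1}{\arity})^2$, and bound the single deviation at the minimizing index by the full sum of squares. Your final extremal step is in fact slightly cleaner than the paper's, which introduces $\epsilon_j=p_j^{(n)}-\tfrac{1}{\arity}$ and bounds $\max_j\epsilon_j$ (where it really needs $-\min_j\epsilon_j$), whereas you go directly to the argmin.
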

\begin{lemma}
If a split in node $n$ is perfectly balanced, then $\alpha^{(n)} \leq (J^{*} - J_n)/2$.
\label{lem:isol_pur}
\end{lemma}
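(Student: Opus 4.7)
My plan is to reduce the claim to a per-class inequality that exploits the perfect-balance assumption, and then prove that inequality by case analysis on $\max_j p_{j|i}^{(n)}$.

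\emph{Step 1 (Unpack the balance condition).} Perfect balance means $p_j^{(n)} = 1/M$ for every $j \in [1,\arity]$. Substituting this into the definition of $J_n$ gives
\[
J_n \;=\; \frac{2}{M}\sum_{i=1}^{K} \propor_i^{(n)} \sum_{j=1}^{M} \left|p_{j|i}^{(n)} - \frac{1}{M}\right|.
\]
Since $\sum_i \propor_i^{(n)} = 1$ and $J^{*} = \frac{4(M-1)}{M^2}$, this rearranges into
\[
\frac{J^{*} - J_n}{2} \;=\; \frac{1}{M}\sum_{i=1}^{K} \propor_i^{(n)}\!\left[\frac{2(M-1)}{M} - \sum_{j=1}^{M} \left|p_{j|i}^{(n)} - \frac{1}{M}\right|\right].
\]

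\emph{Step 2 (Reduction to a per-class inequality).} Recall $\alpha^{(n)} = \frac{1}{M}\sum_i \propor_i^{(n)} \sum_j \min(p_{j|i}^{(n)}, 1 - p_{j|i}^{(n)})$. Comparing the expressions and using $\sum_i \propor_i^{(n)} = 1$, it suffices to prove, for each class $i$, the inequality
\[
\sum_{j=1}^{M} \min\!\left(p_{j|i}^{(n)}, 1 - p_{j|i}^{(n)}\right) \;+\; \sum_{j=1}^{M}\left|p_{j|i}^{(n)} - \frac{1}{M}\right| \;\leq\; \frac{2(M-1)}{M}. \qquad (\star)
\]

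\emph{Step 3 (Prove $(\star)$ by case analysis).} Fix $i$ and write $p_j := p_{j|i}^{(n)}$, so $p_j \geq 0$ and $\sum_j p_j = 1$. Since at most one coordinate can exceed $1/2$, let $j^{*} = \arg\max_j p_j$ and $p^{*} = p_{j^{*}}$. When $p^{*} > 1/2$, I would split the minimum into $\min(p^{*}, 1-p^{*}) = 1 - p^{*}$ and $\min(p_j, 1-p_j) = p_j$ for $j\neq j^{*}$, yielding $\sum_j \min(p_j, 1-p_j) = 2(1 - p^{*})$. For the deviation term, the triangle inequality applied together with the constraint $\sum_{j \neq j^*} p_j = 1 - p^{*}$ gives $\sum_{j\neq j^{*}} |p_j - 1/M| \geq p^{*} - 1/M$. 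Combining these and using $1/M \leq p^{*} \leq 1$ yields $(\star)$ in this regime. When $p^{*} \leq 1/2$, I would instead use the identity $\min(p, 1-p) = \tfrac12 - |p - \tfrac12|$, so that $\sum_j \min(p_j, 1-p_j) = M/2 - \sum_j |p_j - 1/2|$, and bound the resulting sum by comparing $|p_j - 1/M|$ and $|p_j - 1/2|$ via the reverse triangle inequality.

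\emph{Step 4 (Assemble).} Multiplying $(\star)$ by $\propor_i^{(n)}$, summing over $i$, dividing by $M$, and inserting the expressions from Step~1 collapses everything into $\alpha^{(n)} \leq (J^{*} - J_n)/2$, as required.

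\emph{Main obstacle.} The delicate step is $(\star)$: the triangle-inequality bound on $\sum_{j\neq j^{*}}|p_j - 1/M|$ is loose when the non-maximum coordinates straddle $1/M$ (some above, some below), and the $p^{*}\leq 1/2$ regime is qualitatively different from the $p^{*}>1/2$ regime. The case analysis has to leverage the probability-simplex constraint $\sum_j p_j = 1$ tightly, which is what makes both terms in $(\star)$ bounded simultaneously by the constant $2(M-1)/M$.
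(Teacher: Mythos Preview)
Your reduction to the per-class inequality $(\star)$ is correct, but $(\star)$ itself is false for $M\ge 3$. Take $M=3$ and $(p_1,p_2,p_3)=(1/2,1/2,0)$: then $\sum_j\min(p_j,1-p_j)=1$ and $\sum_j|p_j-1/3|=2/3$, so the left side of $(\star)$ equals $5/3>4/3=2(M-1)/M$. Your Case~1 regime also fails: for $(p_1,p_2,p_3)=(3/5,2/5,0)$ with $p^*=3/5>1/2$, the left side is $4/5+2/3=22/15>4/3$. The concrete mistake in Case~1 is that the triangle inequality gives you a \emph{lower} bound $\sum_{j\ne j^*}|p_j-1/M|\ge p^*-1/M$, and a lower bound on a summand cannot deliver the \emph{upper} bound that $(\star)$ requires. (For $M=2$ both sides of $(\star)$ are identically~$1$, so the reduction does succeed there.)

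The paper's proof follows the same per-class route---it partitions $\{1,\ldots,M\}$ into $\mathcal{A}_i=\{j:p_{j|i}<1/M\}$ and $\mathcal{B}_i=\{j:p_{j|i}\ge 1/M\}$ and compares $J^*-J_n$ term-by-term with $\sum_{j\in\mathcal{A}_i}p_{j|i}+\sum_{j\in\mathcal{B}_i}(1-p_{j|i})$---and has the same defect: the step ``since $\mathcal{B}_i$ has at least one element'' discards the quantity $(|\mathcal{B}_i|-1)(2/M-1)$, which is non-positive for $M\ge 3$, so removing it produces an upper bound on $J^*-J_n$ rather than the claimed lower bound. Indeed the lemma as stated appears to be false for $M\ge 3$: with $K=3$ equiprobable classes whose conditional distributions are the three cyclic shifts of $(3/5,2/5,0)$, the split is perfectly balanced, the purity expression equals $4/15$, yet $(J^*-J_n)/2=2/9<4/15$. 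So no purely per-class argument can work here; any salvageable statement must either weaken the constant or genuinely exploit the cross-class balance constraint $\sum_i q_i^{(n)}p_{j|i}^{(n)}=1/M$ rather than decoupling the classes.
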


Next we provide a bound on the classification error for the tree. In particular, we show that if the objective is ``weakly'' optimized in each node of the tree, where this weak advantage is captured in a form of the \textit{Weak Hypothesis Assumption}, then our algorithm will amplify this weak advantage to build a tree achieving any desired level of accuracy.

\subsection{Error bound}

Denote $y(x)$ to be a fixed target function with domain $\mathcal{X}$, which assigns the data point $x$ to its label, and let $\mathcal{P}$ be a fixed target distribution over $\mathcal{X}$. Together $y$ and $\mathcal{P}$ induce a distribution on labeled pairs $(x,y(x))$. Let $t(x)$ be the label assigned to data point $x$ by the tree. We denote as $\epsilon(\mathcal{T})$ the error of tree $\mathcal{T}$, i.e. ${\epsilon(\mathcal{T}) \coloneqq \expct_{x \sim \mathcal{P}} \Big[ \sum_{i=1}^{\nlabels} \indic [t(x) = i, y(x) \neq i ] \Big]}$ (${1 - \epsilon(\mathcal{T})}$ refers to the accuracy as given by Equation~\ref{eq:class_obj_tree}). Then the following theorem holds

\begin{theorem}
The Weak Hypothesis Assumption says that for any distribution $\mathcal{P}$ over the data, at each node $n$ of the tree $\mathcal{T}$ there exists a partition such that $J_n \geq \gamma$, where $\gamma \in \left[\frac{\arity}{2}\operatorname*{min}\limits_{j=1,2,\dots,\arity}p_j, 1 - \frac{\arity}{2}\operatorname*{min}\limits_{j=1,2,\dots,\arity}p_j\right]$.

Under the Weak Hypothesis Assumption, for any $\kappa \in [0,1]$, to
obtain $\epsilon(\mathcal{T}) \leq \kappa$ it suffices to have a tree with
\vspace{-0.05in}
\[N \geq \left(\frac{1}{\kappa}\right)^{\frac{16[\arity(1-2\gamma) + 2\gamma](\arity-1)}{\log_2 e \arity^2\gamma^2}\ln
  \nlabels} \:\:\:\:\:\:\:\:\:\:\text{internal nodes}.
\] 
\label{thm:errorboundgen}
\vspace{-0.2in}
\end{theorem}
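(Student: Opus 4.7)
The plan is to adapt the potential-function boosting analysis used for the LOM tree (Kearns--Mansour style) from the binary case to arity $M$. Throughout, write $w_l$ for the fraction of examples reaching leaf $l$ of the current partial tree and $q_i^{(l)}$ for the fraction of those examples that have class $i$, so $\sum_l w_l=1$ and $\sum_i q_i^{(l)}=1$.

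First, I would introduce a concave impurity surrogate $G(q)=\sum_{i=1}^{K}\sqrt{q_i(1-q_i)}$ and the associated tree potential $\Phi(\mathcal{T})=\sum_{l\in\text{leaves}} w_l\,G(q^{(l)})$. Because the tree predicts the argmax label at each leaf, $1-\max_i q_i^{(l)}\le G(q^{(l)})$, so $\epsilon(\mathcal{T})\le \Phi(\mathcal{T})$; moreover the root value satisfies $\Phi_0\le \sqrt{K(K-1)}\le K$, which is where the $\ln K$ in the exponent will come from.

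Next, I would quantify how much one split reduces $\Phi$. Splitting a leaf $n$ into $M$ children using a decision function with quantities $p^{(n)}_j$ and $p^{(n)}_{j\mid i}$ produces children of weight $w_n p_j^{(n)}$ and class proportions $q_i^{(j)}=q_i^{(n)} p_{j\mid i}^{(n)}/p_j^{(n)}$, and $q_i^{(n)}=\sum_j p_j^{(n)} q_i^{(j)}$. By concavity of $x\mapsto\sqrt{x(1-x)}$, each per-class drop
\[
\delta_i \;=\; \sqrt{q_i^{(n)}(1-q_i^{(n)})} \;-\; \sum_j p_j^{(n)}\sqrt{q_i^{(j)}(1-q_i^{(j)})}
\]
is non-negative; the key step is to lower bound $\sum_i \delta_i$ by a multiple of $J_n$. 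I would do this by writing $\sqrt{a}-\sqrt{b}=(a-b)/(\sqrt{a}+\sqrt{b})$, substituting $q_i^{(j)}-q_i^{(n)}=q_i^{(n)}(p_{j\mid i}^{(n)}-p_j^{(n)})/p_j^{(n)}$, and applying a quantitative Taylor bound on $\sqrt{x(1-x)}$ along the segment from $q_i^{(n)}$ to $q_i^{(j)}$, using the balancedness range $p_j^{(n)}\ge\beta^{(n)}$ implied by the Weak Hypothesis Assumption together with Lemma~\ref{lem:isol_bal}. Summing against $p_j^{(n)}$ and then over $i$ collapses to $J_n$, and produces an estimate of the form $\sum_i \delta_i\ge c(M,\gamma)\,\gamma$, where the constant $c(M,\gamma)$ is exactly what will yield the factor $M^2\gamma^2/\bigl(16[M(1-2\gamma)+2\gamma](M-1)\bigr)$ after bookkeeping.

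Third, I would pick \emph{which} leaf to split. After $t$ added internal nodes the tree has $L_t=1+t(M-1)$ leaves, so by averaging some leaf $n$ satisfies $w_n\,G(q^{(n)})\ge \Phi_t/L_t$; combined with $G(q^{(n)})\le G_{\max}\le \sqrt{K(K-1)}$ this gives $w_n\ge \Phi_t/(L_t G_{\max})$. Multiplying by the per-split drop yields the multiplicative recurrence
\[
\Phi_{t+1}\;\le\; \Phi_t\Bigl(1-\tfrac{c(M,\gamma)\gamma}{L_t\,G_{\max}}\Bigr).
\]
Iterating, using $\log(1-x)\le -x$ and $\sum_{t=1}^{N}1/L_t = \Theta\bigl(\tfrac{1}{M-1}\log L_N\bigr)$ with $L_N=\Theta(NM)$, yields $\Phi_N\le \Phi_0\, N^{-\alpha(M,\gamma)}$ for an exponent $\alpha(M,\gamma)$ of the required order. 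Setting $\Phi_N\le \kappa$, taking logarithms, and solving for $N$ gives the stated bound after gathering the $\log_2 e$ and $\ln K$ factors.

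The main obstacle is the quantitative part of Step~2: translating the $\ell_1$-type weak-advantage $J_n\ge \gamma$ into a multiplicative drop of the square-root concave potential $\Phi$ uniformly in $M$, since $\sqrt{x(1-x)}$ has unbounded curvature near $0$ and $1$. This is precisely why the hypothesis restricts $\gamma$ to $[\tfrac{M}{2}\min_j p_j,\,1-\tfrac{M}{2}\min_j p_j]$: one must use Lemmas~\ref{lem:isol_bal}--\ref{lem:isol_pur} to bound $p_j^{(n)}$ away from $0$ along the trajectory of splits, so that the Taylor remainder in the concavity estimate stays controlled; once this is in place Steps~3 and 4 are essentially routine.
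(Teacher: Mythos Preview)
Your overall architecture---concave leaf potential, lower bound on the per-split drop, multiplicative recurrence in $t$, then invert---is exactly the paper's. The differences are in the two places that do the real work, and one of them is a genuine gap.

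\medskip
\noindent\textbf{Potential and the drop estimate.} The paper does \emph{not} use the square-root surrogate $G(q)=\sum_i\sqrt{q_i(1-q_i)}$; it uses Shannon entropy $\tilde G^e(q)=\sum_i q_i\ln(1/q_i)$. The reason is that entropy is $1$-strongly concave with respect to the $\ell_1$ norm on the simplex, which in one stroke gives
\[
\Delta_t^e \;\ge\; \frac{w}{2}\sum_{j=1}^{M} p_j\,\bigl\|q^{(j)}-q\bigr\|_1^{\,2}
\;=\; \frac{w}{2}\sum_{j=1}^{M}\frac{1}{p_j}\Bigl(\sum_{i=1}^{K} q_i\,|p_{j|i}-p_j|\Bigr)^{2},
\]
and one application of Jensen over $j$ collapses the right-hand side to $\tfrac{M^2}{8\max_j p_j}\,w\,J_n^{2}$. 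Your per-coordinate Taylor expansion of $\sqrt{x(1-x)}$ instead yields $\ell_2$-type quantities $\sum_j p_j(q_i^{(j)}-q_i)^2$, and your sketch does not explain how these ``collapse to $J_n$'' (an $\ell_1$ object) without losing a factor depending on $K$. This can be repaired---$\sum_i\sqrt{q_i(1-q_i)}$ is also strongly concave in $\ell_1$ on the simplex---but not via the coordinate-wise argument you wrote; you would have to mimic the paper's $\ell_1$ route, at which point entropy is simpler.

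\medskip
\noindent\textbf{Bounding $p_j^{(n)}$.} Your plan to use Lemmas~\ref{lem:isol_bal}--\ref{lem:isol_pur} to keep $p_j^{(n)}$ away from $0$ does not work: those lemmas are \emph{isolation} results that assume the split is already perfectly pure (resp.\ perfectly balanced), which is exactly what fails at a generic node. The paper never invokes them in this proof. Instead, the range of $\gamma$ built into the Weak Hypothesis Assumption is used \emph{directly} to force $p_j\in\bigl[\tfrac{2\gamma}{M},\,\tfrac{M(1-2\gamma)+2\gamma}{M}\bigr]$ at every node, and it is the \emph{upper} endpoint that is substituted for $p_j$ in the $1/p_j$ factor above, producing the $[M(1-2\gamma)+2\gamma]$ in the exponent. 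With that in hand, the recursion (heaviest-leaf argument, $L_t=1+t(M-1)$, harmonic sum) and the final inequality $\epsilon(\mathcal T)\le G^e(\mathcal T)$ go through just as you described.
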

The above theorem shows the number of splits that suffice to reduce the multi-class classification error of the tree below an arbitrary threshold $\kappa$. As shown in the proof of the above theorem, the \textit{Weak Hypothesis Assumption} implies that all $p_j$s satisfy: $p_j \in [\frac{2\gamma}{\arity},\frac{\arity(1-2\gamma) + 2\gamma}{\arity}]$. Below we show a tighter version of this bound when assuming that each node induces balanced split.

\begin{corollary}
The Weak Hypothesis Assumption says that for any distribution $\mathcal{P}$ over the data, at each node $n$ of the tree $\mathcal{T}$ there exists a partition such that $J_n \geq \gamma$, where $\gamma \in \mathbb{R}^{+}$.

Under the Weak Hypothesis Assumption and when all nodes make perfectly balanced splits, for any $\kappa \in [0,1]$, to
obtain $\epsilon(\mathcal{T}) \leq \kappa$ it suffices to have a tree with
\vspace{-0.05in}
\[N \geq \left(\frac{1}{\kappa}\right)^{\frac{16(\arity-1)}{\log_2 e \arity^2\gamma^2}\ln
  \nlabels} \:\:\:\:\:\:\:\:\:\:\text{internal nodes}.
\] 
\label{corr:errorboundgensquare}
\end{corollary}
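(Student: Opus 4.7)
The plan is to retrace the boosting-style argument used to prove Theorem~\ref{thm:errorboundgen}, and to verify that the multiplicative factor $\arity(1-2\gamma)+2\gamma$ appearing in its exponent collapses to $1$ once we impose perfect balance $p_j^{(n)} = 1/\arity$ at every node.

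First, I would reuse the same potential function as in Theorem~\ref{thm:errorboundgen}: a weighted concave impurity of the form $\Phi(\mathcal{T}) = \sum_{\text{leaves }\ell} w_\ell\,G(\propor^{(\ell)})$, chosen so that $\epsilon(\mathcal{T}) \leq \Phi(\mathcal{T})$, $\Phi_0 \leq \ln \nlabels$ on the one-leaf tree, and splits admit a per-step recursion of the form $\Phi_{t+1} \leq (1-\rho)\Phi_t$, where $\rho = \rho(\gamma,\arity)$ is extracted from the Weak Hypothesis. Iterating and inverting in the standard Kearns--Mansour fashion produces a bound $N \geq (1/\kappa)^{(\ln \nlabels)/(\rho\,\log_2 e)}$; matching this against the corollary's exponent shows that it suffices to establish $\rho(\gamma,\arity) = (\log_2 e)\,\arity^2\gamma^2/[16(\arity-1)]$.

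Second, I would verify that this is precisely the value of $\rho$ that the proof of Theorem~\ref{thm:errorboundgen} delivers under perfect balance. The key estimate in that proof lower-bounds the per-split drop $w_n\bigl(G(\propor^{(n)}) - \sum_j p_j^{(n)}\,G(\propor^{(n,j)})\bigr)$ by invoking concavity of $G$ together with the $\ell_1$ deviations $|p_{j|i}^{(n)} - p_j^{(n)}|$ that make up $J_n$. In the general case the nonuniform children weights $p_j^{(n)}$ force a $\max_j p_j^{(n)}$-style factor into the bound, which Theorem~\ref{thm:errorboundgen} controls by $\max_j p_j^{(n)} \leq (\arity(1-2\gamma)+2\gamma)/\arity$ using the admissible range of $\gamma$; this is exactly the origin of the extra $\arity(1-2\gamma)+2\gamma$ term in that exponent. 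Under balance, $\max_j p_j^{(n)} = 1/\arity$ identically, so this term is replaced by $1$ and $\rho$ takes the required clean value. I would also remark that the corollary's relaxed requirement $\gamma \in \mathbb{R}^+$ is consistent: the Theorem~\ref{thm:errorboundgen} constraint $\gamma \in [\tfrac{\arity}{2}\min_j p_j,\, 1 - \tfrac{\arity}{2}\min_j p_j]$ degenerates to a single point under balance and no longer meaningfully restricts $\gamma$ once the balance assumption is made separately.

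The main obstacle will be the bookkeeping in the second step: I need to isolate the precise inequality inside the proof of Theorem~\ref{thm:errorboundgen} that converts $J_n \geq \gamma$ into a lower bound on the per-split $\Phi$-drop, and confirm that substituting $p_j^{(n)} = 1/\arity$ removes exactly the $\arity(1-2\gamma)+2\gamma$ factor while leaving every other constant (notably the $16$ and $\log_2 e$ producing the numerical exponent) untouched. Once that inequality is pinned down, the rest is a routine algebraic simplification of the iterated recursion.
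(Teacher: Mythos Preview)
Your proposal is correct and matches the paper's own argument: the paper's proof of the corollary simply revisits the chain of inequalities in Theorem~\ref{thm:errorboundgen}, observes that under perfect balance the factor $1/p_j$ equals $\arity$ exactly (so the bound $p_j \leq [\arity(1-2\gamma)+2\gamma]/\arity$ is replaced by $p_j = 1/\arity$, collapsing that term to $1$), and then invokes ``the same steps as shown in the proof of Theorem~\ref{thm:errorboundgen}'' for the remaining recursion. One minor correction to your sketch: the per-step recursion is not of the geometric form $\Phi_{t+1} \leq (1-\rho)\Phi_t$ but rather $\Phi_{t+1} \leq (1-c/(t+1))\Phi_t$, since the weight $w$ of the heaviest leaf is bounded below by $\Phi_t/[(t+1)(\arity-1)\ln\nlabels]$; this is what the Kearns--Mansour unrolling handles and it still yields the stated polynomial bound on $N$.
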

\section{Extension to Density Estimation}
\label{sec:application}

We now show how to adapt the algorithm presented in Section~\ref{sec:algorithm} for conditional density estimation, using the example of language modeling. 

\paragraph{Hierarchical Log Bi-Linear Language Model (HLBL)} We take the same approach to language modeling as \cite{NIPS2008_3583}. First, using the chain rule and an order $T$ Markov assumption we model the probability of a sentence $\mathbf{w} = (w_1, w_2, \ldots, w_n)$ as:
\vspace{-0.13in}
\begin{equation*}
p(w_1, w_2, \ldots, w_n) =  \prod_{t=1}^n p(w_t|w_{t-T, \ldots, t-1})
\vspace{-0.07in}
\end{equation*}
Similarly to their work, we also use a low dimensional representation of the context $(w_{t-T, \ldots, t-1})$. In this setting, each word $w$ in the vocabulary $\mathcal{V}$ has an embedding $U_w \in \mathbb{R}^{d_r}$. A given context $x = (w_{t-T}, \ldots, w_{t-1})$ corresponding to position $t$ is then represented by a context embedding vector $r_x$ such that
\vspace{-0.09in}
\[r_x = \sum_{k=1}^T R_k U_{w_{t-k}},
\vspace{-0.09in}\]
where $U \in \mathbb{R}^{|\mathcal{V}| \times {d_r}}$ is the embedding matrix, and $R_k \in \mathbb{R}^{{d_r} \times {d_r}}$ is the transition matrix associated with the $k^{\text{th}}$ context word.

The most straight-forward way to define a probability function is then to define the distribution over the next word given the context representation as a soft-max, as done in \cite{mnih2007three}. That is:
\vspace{-0.07in}
\begin{eqnarray*}
p(w_t = i| x) &=& \sigma_i(r_x^{\top} U + \mathbf{b}) \\
&=& \frac{\exp(r_x^{\top} U_{i} + b_{i})}{\sum_{w \in \mathcal{V}} \exp(r_x^{\top} U_{w} + b_{w}) } ,
\end{eqnarray*}
where $b_w$ is the bias for word $w$. However, the complexity of computing this probability distribution in this setting is $O(|V| \times d_r)$, which can be prohibitive for large corpora and vocabularies.

Instead, \cite{NIPS2008_3583} takes a hierarchical approach to the problem. They construct a binary tree, where each word $w \in \mathcal{V}$ corresponds to some leaf of the tree, and can thus be identified with the path from the root to the corresponding leaf by making a sequence of choices of going left versus right. This corresponds to the tree-structured log-likelihood objective presented in Equation \ref{eq:density_obj_tree} for the case where $M=2$, and $f_{\Theta}(x)=r_x$. Thus, if $\mathbf{c}^i$ is the path to word $i$ as defined in Expression~\ref{eq:path}, then:
\vspace{-0.1in}
\begin{equation}
\log p(w_t = i| x) = \sum_{d=1}^D\log \sigma_{c_{d, 2}^i}((r_x^{\top} U^{c_{d, 1}^i} + \mathbf{b}^{c_{d, 1}^i})
\label{eq:tree_proba}
\vspace{-0.05in}
\end{equation}
In this binary case, $\sigma$ is the sigmoid function, and for all non-leaf nodes $n \in \{1,2,\dots,\nnodes\}$, we have $U^n \in \mathbb{R}^{d_r}$ and $\mathbf{b}^n \in \mathbb{R}^{d_r}$. The cost of computing the likelihood of word $w$ is then reduced to $O(\log(|\mathcal{V}|) \times d_r)$. In their work, the authors start the training procedure by using a random tree, then alternate parameter learning with using a clustering-based heuristic to rebuild their hierarchy. We expand upon their method by providing an algorithm which allows for using hierarchies of arbitrary width, and jointly learns the tree structure and the model parameters.

\paragraph{Using our Algorithm} We may use Algorithm \ref{algo:learn} as is to learn a good tree structure for classification: that is, a model that often predicts $w_t$ to be the most likely word after seeing the context $(w_{t-T}, \ldots, w_{t-1})$. However, while this could certainly learn interesting representations and tree structure, there is no guarantee that such a model would achieve a good average log-likelihood. Intuitively, there are often several valid possibilities for a word given its immediate left context, which a classification objective does not necessarily take into account. Yet another option would be to learn a tree structure that maximizes the classification objective, then fine-tune the model parameters using the log-likelihood objective. We tried this method, but initial tests of this approach did not do much better than the use of random trees. Instead, we present here a small modification of Algorithm \ref{algo:learn} which is equivalent to log-likelihood training when restricted to the fixed tree setting, and can be shown to increase the value of the node objectives $J_n$: by replacing the gradients with respect to $p_{target}$ by those with respect to $\log p_{target}$. Then, for a given tree structure, the algorithm takes a gradient step with respect to the log-likelihood of the samples:
 \vspace{-0.03in}
\begin{equation}
\hspace{-0.02in}\frac{\partial J_n}{ \partial \log p^{(n)}_{j|i}} = \frac{2}{\arity} \propor^{(n)}_i (1 \!-\! \propor^{(n)}_i) \sign(p^{(n)}_{j|i} \!-\! p^{(n)}_j) p^{(n)}_{j|i}.\hspace{-0.05in}
\label{eq:log_gradients}
 \vspace{-0.04in}
\end{equation}
Lemma \ref{lem:subgrad} extends to the new version of the algorithm.

\setcounter{figure}{2}
\begin{figure*}[htp!]
\centering
\includegraphics[width=1\textwidth]{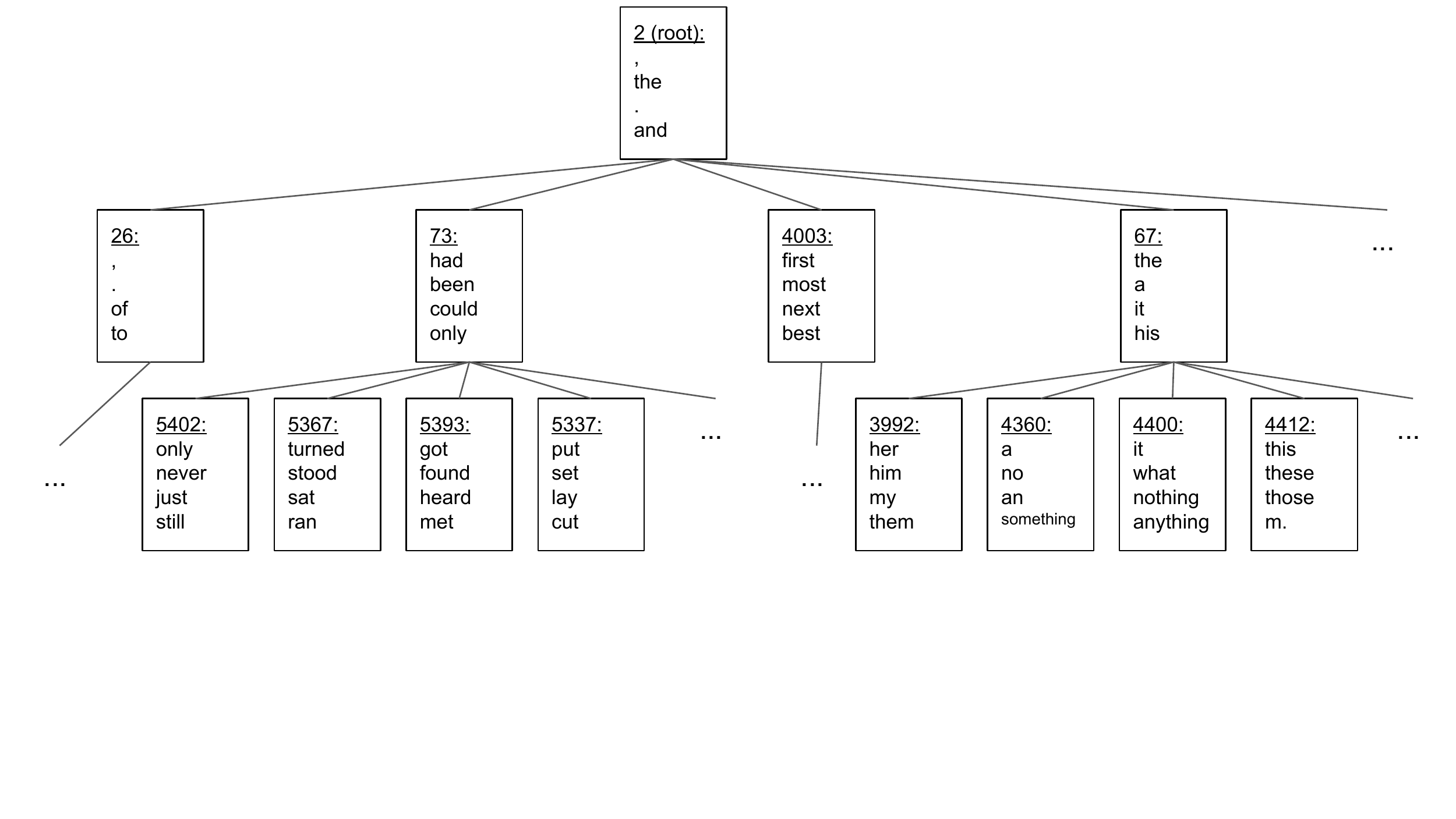}
\vspace{-1.5in}
\caption{Tree learned from the Gutenberg corpus, showing the four most common words assigned to each node.}
\label{fig:gutem_tree}
\end{figure*}

\section{Experiments}
\label{sec:experiments}

We ran experiments to evaluate both the classification and density estimation version of our algorithm. For classification, we used the YFCC100M dataset \cite{DBLP:journals/cacm/ThomeeSFENPBL16}, which consists of a set of a hundred million Flickr pictures along with captions and tag sets split into 91M training, 930K validation and 543K test examples. We focus here on the problem of predicting a picture's tags given its caption. For density estimation, we learned a log-bilinear language model on the Gutenberg novels corpus, and compared the perplexity to that obtained with other flat and hierarchical losses. Experimental settings are described in greater detail in the Supplementary material.

\subsection{Classification}

 We follow the setting of \cite{DBLP:journals/corr/JoulinGBM16} for the YFCC100M tag prediction task: we only keep the tags which appear at least a hundred times, which leaves us with a label space of size 312K. We compare our results to those obtained with the FastText software \cite{DBLP:journals/corr/JoulinGBM16}, which uses a binary hierarchical softmax objective based on Huffman coding (Huffman trees are designed to minimize the expected depth of their leaves weighed by frequencies and have been shown to work well with word embedding systems \cite{mikolov2013distributed}), and to the Tagspace system \cite{DBLP:conf/emnlp/WestonCA14}, which uses a sampling-based margin loss (this allows for training in tractable time, but does not help at test time, hence the long times reported).
 We also extend the FastText software to use Huffman trees of arbitrary width. All models use a bag-of-word embedding representation of the caption text; the parameters of the input representation function $f_{\Theta}$ which we learn are the word embeddings $U_{w} \in \mathbb{R}^d$ (as in Section \ref{sec:application}) and a caption representation is obtained by summing the embeddings of its words. We experimented with embeddings of dimension $d=50$ and $d=200$. We predict one tag for each caption, and report the precision as well as the training and test times in Table \ref{tab:class-res}.

\begin{table}[t!]
\setlength{\tabcolsep}{4.5pt}
\vspace{-0.025in}
\centering
\begin{tabular}{c l c c c c}
\toprule
$d$ &\multicolumn{1}{c}{Model} & Arity & P@1 & Train & Test \\ \midrule
\multirow{6}{*}{50} & TagSpace$^1$   & -  & 30.1  & 3h8   & 6h\\ \cmidrule{2-6}
 & FastText$^2$  & 2    & 27.2  & \textbf{8m} & \textbf{1m}   \\ \cmidrule{2-6}
 & \multirow{2}{*}{$M$-ary Huffman Tree} & 5 & 28.3   & \textbf{8m}  & \textbf{1m}  \\
 &    & 20   & 29.9   & 10m   & 3m\\ \cmidrule{2-6}
 & \multirow{2}{*}{Learned Tree} & 5 & 31.6  & 18m   & \textbf{1m}   \\ 
 &         & 20   & \textbf{32.1}   & 30m   & 3m   \\ \midrule
\multirow{6}{*}{200} & TagSpace$^1$  & 35.6 & 5h32  & 15h  \\ \cmidrule{2-6}
 & FastText$^2$ & 2 & 35.2  & \textbf{12m} & \textbf{1m}   \\ \cmidrule{2-6}
 & \multirow{2}{*}{$M$-ary Huffman Tree} & 5 & 35.8 & 13m   & 2m   \\
  & & 20   & 36.4    & 18m   & 3m   \\ \cmidrule{2-6}
 &\multirow{2}{*}{Learned Tree} & 5  & 36.1  & 35m   & 3m   \\
  &  & 20    & \textbf{36.6}  & 45m   & 8m   \\ \bottomrule
\end{tabular}
\caption{Classification performance on the YFCC100M dataset. $^1$\cite{DBLP:conf/emnlp/WestonCA14}. $^2$\cite{DBLP:journals/corr/JoulinGBM16}. $M$-ary Huffman Tree modifies FastText by adding an $M$-ary hierarchical softmax objective.}
\label{tab:class-res}
\end{table}

Our implementation is based on the FastText open source version\footnote{https://github.com/facebookresearch/fastText}, to which we added $M$-ary Huffman and learned tree objectives. Table \ref{tab:class-res} reports the best accuracy we obtained with a hyper-parameter search using this version on our system so as to provide the most meaningful comparison, even though the accuracy is less than that reported in \cite{DBLP:journals/corr/JoulinGBM16}.

We gain a few different insights from Table \ref{tab:class-res}. First, although wider trees are theoretically slower (remember that the theoretical complexity is $O(M\log_{M}(N))$ for an $M$-ary tree with $N$ labels), they run incomparable time in practice and always perform better. Using our algorithm to learn the structure of the tree also always leads to more accurate models, with a gain of up to 3.3 precision points in the smaller 5-ary setting. Further, both the importance of having wider trees and learning the structure seems to be less when the node prediction functions become more expressive. At a high level, one could imagine that in that setting, the model can learn to use different dimensions of the input representation for different nodes, which would minimize the negative impact of having to learn a representation which is suited to more nodes.

Another thing to notice is that since prediction time only depends on the expected depth of a label, our models which learned balanced trees are nearly as fast as Huffman coding which is optimal in that respect (except for the dimension 200, 20-ary tree, but the tree structure had not stabilized yet in that setting). Given all of the above remarks, our algorithm especially shines in settings where computational complexity and prediction time are highly constrained at test time, such as mobile devices or embedded systems.

\subsection{Density Estimation}

We also ran language modeling experiments on the Gutenberg novel corpus\footnote{http://www.gutenberg.org/}, which has about 50M tokens and a vocabulary of 250,000 words.

\setcounter{figure}{1}

\begin{figure}[t!]
\vspace{-0.15in}
\centering
\includegraphics[width=0.49\textwidth]{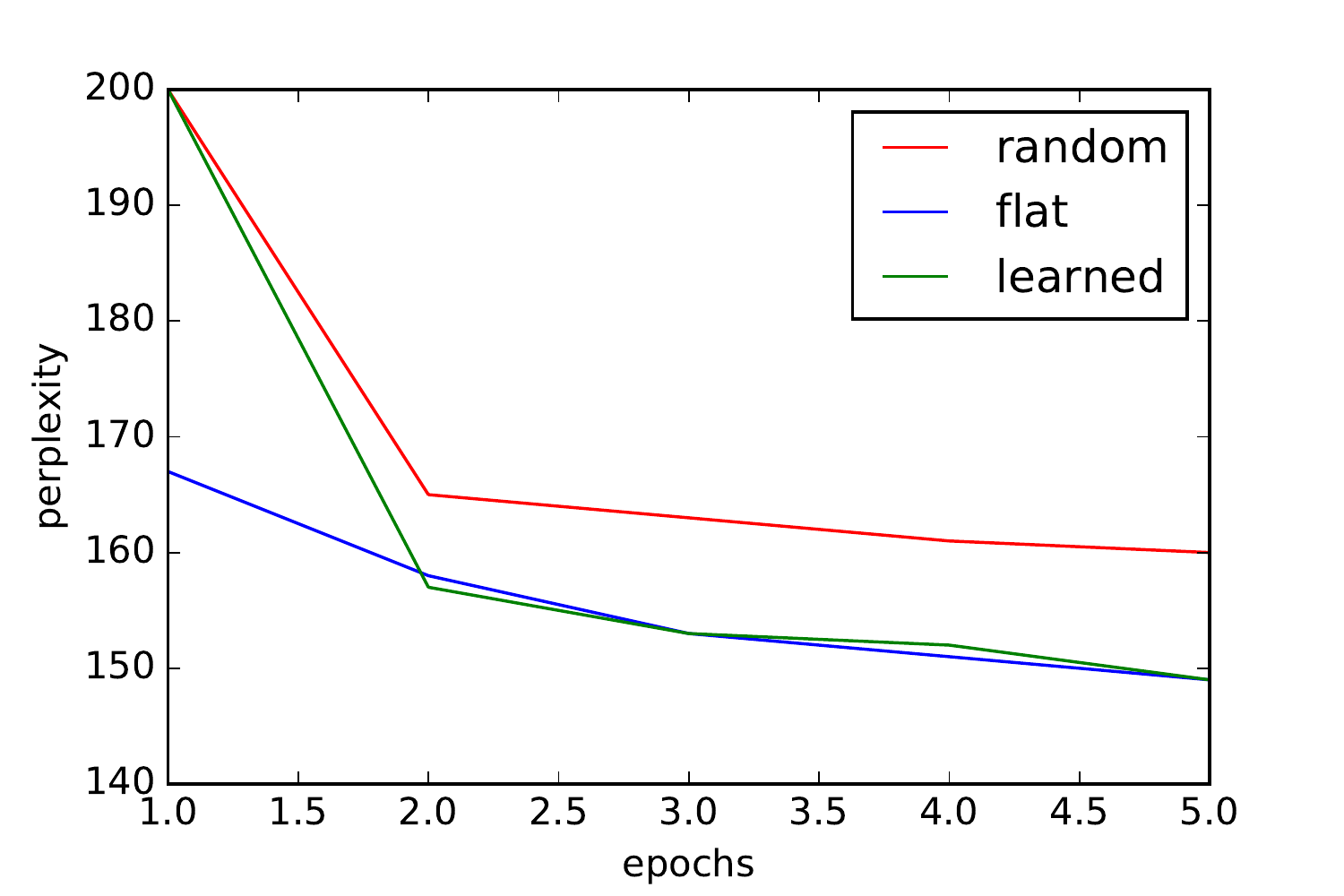}
\vspace{-0.25in}
\caption{Test perplexity per epoch.}
\label{fig:learn_curves}
\vspace{-0.07in}
\end{figure}

One notable difference from the previous task is that the language modeling setting can drastically benefit from the use of GPU computing, which can make using a flat softmax tractable (if not fast). While our algorithm requires more flexibility and thus does not benefit as much from the use of GPUs, a small modification of Algorithm \ref{algo:assign} (described in the Supplementary material) allows it to run under a maximum depth constraint and remain competitive. The results presented in this section are obtained using this modified version, which learns 65-ary trees of depth 3.

Table \ref{tab:perplexity} presents perplexity results for different loss functions, along with the time spent on computing and learning the objective (softmax parameters for the flat version, hierarchical softmax node parameters for the fixed tree, and hierarchical softmax structure and parameters for our algorithm). The learned tree model is nearly three and seven times as fast at train and test time respectively as the flat objective without losing any points of perplexity.

\begin{table}[tp!]
\setlength{\tabcolsep}{5pt}
\centering
\vspace{0.1in}
\begin{tabular}{l c c c}
\toprule
Model      & perp. & train ms/batch & test ms/batch \\ \midrule
Clustering Tree & 212  & \textbf{2.0} & \textbf{1.0} \\ \midrule
Random Tree     & 160  & \textbf{1.9} & \textbf{0.9} \\ \midrule
Flat soft-max   & 149 & \textit{12.5} & \textit{6.9} \\ \midrule
Learned Tree    & \textbf{148} & \textit{4.5} & \textbf{0.9} \\ \bottomrule
\end{tabular}
\caption{Comparison of a flat soft-max to a 65-ary hierarchical soft-max (learned, random and heuristic-based tree).}
\label{tab:perplexity}
\end{table}

Huffman coding does not apply to trees where all of the leaves are at the same depth. Instead, we use the following heuristic as a baseline, inspired by \cite{NIPS2008_3583}: we learn word embeddings using FastText, perform a hierarchical clustering of the vocabulary based on these, then use the resulting tree to learn a new language model. We call this approach ``Clustering Tree''. However, for all hyper-parameter settings, this tree structure did worse than a random one. We conjecture that its poor performance is because such a tree structure means that the deepest node decisions can be quite difficult.

Figure \ref{fig:learn_curves} shows the evolution of the test perplexity for a few epochs. It appears that most of the relevant tree structure can be learned in one epoch: from the second epoch on, the learned hierarchical soft-max performs similarly to the flat one. Figure \ref{fig:gutem_tree} shows a part of the tree learned on the Gutenberg dataset, which appears to make semantic and syntactic sense.

\vspace{-0.02in}
\section{Conclusion}
\label{sec:conclusion}

In this paper, we introduced a provably accurate algorithm for jointly learning tree structure and data representation for hierarchical prediction. We applied it to a multi-class classification and a density estimation problem, and showed our models' ability to achieve favorable accuracy in competitive times in both settings.

\newpage

\bibliography{LearnFeatMultiClass}
\bibliographystyle{icml2017}

\newpage
\normalsize
\onecolumn

\vbox{\hsize\textwidth
\linewidth\hsize \vskip 0.1in \toptitlebar \centering
{\Large\bf Simultaneous Learning of Trees and Representations for Extreme Classification with Application to Language Modeling\\ (Supplementary material) \par}
\bottomtitlebar
\vskip 0.3in minus 0.1in}
\vspace{-0.3in}

\section{Geometric interpretation of probabilities $\bm{p_j^{(n)}}$ and $\bm{p_{j|i}^{(n)}}$}

\begin{figure*}[htp!]
\centering
\begin{minipage}{0.27\textwidth}
\includegraphics[width=\textwidth]{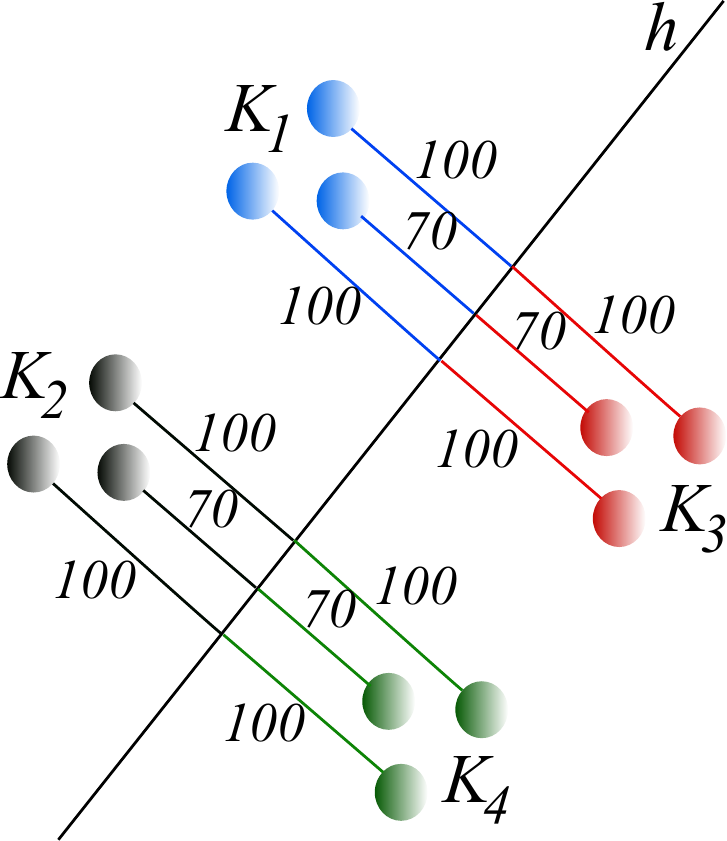}
\end{minipage}
\hspace{0.2in}\begin{minipage}{0.69\textwidth}
Discrete:\\
$p_1^{(n)} = \frac{6}{12} = \bm{0.5}$\\
$p_{1|1}^{(n)} = \frac{3}{3} = \bm{1}$, \hspace{0.2in}$p_{1|2}^{(n)} = \frac{3}{3} = \bm{1}$, \hspace{0.2in}$p_{1|3}^{(n)} = \frac{0}{3} = \bm{0}$, \hspace{0.2in}$p_{1|4}^{(n)} = \frac{0}{3} = \bm{0}$\\
\\
Continuous:\\
$p_1^{(n)} = \frac{1}{12}(\sigma(100) + \sigma(70) + \ldots + \sigma(-70) + \sigma(-100)) \approx \bm{0.5}$\\
$p_{1|1}^{(n)} = \frac{1}{3}(\sigma(100) + \sigma(70) + \sigma(100)) \approx \bm{1}$\\
$p_{1|2}^{(n)} = \frac{1}{3}(\sigma(100) + \sigma(70) + \sigma(100)) \approx \bm{1}$\\
$p_{1|3}^{(n)} = \frac{1}{3}(\sigma(-100) + \sigma(-70) + \sigma(-100)) \approx \bm{0}$\\
$p_{1|4}^{(n)} = \frac{1}{3}(\sigma(-100) + \sigma(-70) + \sigma(-100)) \approx \bm{0}$
\end{minipage}
\vspace{-0.1in}
\caption{The comparison of discrete and continuous definitions of probabilities $p_j^{(n)}$ and $p_{j|i}^{(n)}$ on a simple example with $\nlabels = 4$ classes and binary tree ($\arity = 2$). $n$ is an exemplary node, e.g. root. $\sigma$ denotes sigmoid function. Color circles denote data points.
}
\label{fig:relaxation}
\end{figure*}

\begin{remark}
One could define $p_j^{(n)}$ as the ratio of the number of examples that reach node $n$ and are sent to its $j^{\text{th}}$ child to the total the number of examples that reach node $n$ and $p_{j|i}^{(n)}$ as the ratio of the number of examples that reach node $n$, correspond to label $i$, and are sent to the $j^{\text{th}}$ child of node $n$ to the total the number of examples that reach node $n$ and correspond to label $i$. We instead look at the continuous counter-parts of these discrete definitions as given by Equations~\ref{eq:balprob} and~\ref{eq:pureprob} and illustrated in Figure~\ref{fig:relaxation} (note that continuous definitions have elegant geometric interpretation based on margins), which simplifies the optimization problem. 
\end{remark}

\section{Theoretical proofs}
\label{sec:proofs}

\begin{proof}[Proof of Lemma~\ref{lem:subgrad}]
Recall the form of the objective defined in \ref{eq:objective}:
\begin{eqnarray*}
J_n &=& \frac{2}{\arity} \sum_{i=1}^{\nlabels} \propor^{(n)}_i \Big( \sum_{j=1}^{\arity} |p^{(n)}_j - p^{(n)}_{j|i}| \Big)\\
&=& \frac{2}{\arity} \expct_{i \sim \propor^{(n)}} \Big[ f^J_n(i, p^{(n)}_{\cdot|\cdot}, \propor^{(n)})\Big]
\end{eqnarray*}
Where:
\begin{eqnarray*}
f^J_n(i, p^{(n)}_{\cdot|\cdot}, \propor^{(n)}) &=&  \sum_{j=1}^{\arity} \Big| p^{(n)}_j - p^{(n)}_{j|i} \Big| = \sum_{j=1}^\arity \Big| p^{(n)}_{j|i} - \sum_{i'=1}^\nlabels  \propor^{(n)}_{i'} p^{(n)}_{j|i'} \Big| \\ 
 &=& \sum_{j=1}^\arity \Big| \sum_{i'=1}^\nlabels   (\indic_{i = i'} - \propor^{(n)}_{i'}) p^{(n)}_{j|i'} \Big|
\end{eqnarray*}
Hence:
\begin{equation*}
\frac{\partial f^J_n(i, p^{(n)}_{\cdot|\cdot}, \propor^{(n)})}{ \partial p^{(n)}_{j|i}} = (1 - \propor^{(n)}_i) \sign(p^{(n)}_{j|i} - p^{(n)}_j)
\end{equation*}
And:
\begin{eqnarray*}
\frac{\partial f^J_n(i, p^{(n)}_{\cdot|\cdot}, \propor^{(n)})}{ \partial \log p^{(n)}_{j|i}} &=& (1 - \propor^{(n)}_i) \sign(p^{(n)}_{j|i} - p^{(n)}_j) \frac{\partial p^{(n)}_{j|i}}{ \partial \log p^{(n)}_{j|i}} \\
&=& (1 - \propor^{(n)}_i) \sign(p^{(n)}_{j|i} - p^{(n)}_j) p^{(n)}_{j|i}
\end{eqnarray*}
By assigning each label $j$ to a specific child $i$ under the constraint that no child has more than $L$ labels, we take a step in the direction $\partial E \in \{0, 1\}^{\arity \times \nlabels}$, where:
\begin{eqnarray*}
\forall i \in [1, \nlabels],& \sum_{j=1}^{\arity} \partial E_{j, i} = 1\\
&\text{and} \nonumber \\
\forall j \in [1, \arity],& \sum_{i=1}^{\nlabels} \partial E_{j, i} \leq L
\end{eqnarray*}
Thus:
\begin{eqnarray}
\frac{\partial J_n}{\partial p_{\cdot|\cdot}^{(n)}} \partial E &=& \frac{2}{M} \frac{\expct_{i \sim \propor^{(n)}} \Big[ f^J_n(i, p^{(n)}_{\cdot|\cdot}, \propor^{(n)})\Big]}{\partial p_{\cdot|\cdot}^{(n)}} \partial E \nonumber \\
&=& \frac{2}{M} \sum_{i=1}^{\nlabels}\propor_{i}^{(n)}(1 - \propor_{i}^{(n)}) \sum_{j=1}^M \Big( \sign(p^{(n)}_{j|i} - p^{(n)}_j) \partial E_{j,i} \Big) \label{eq:grad_tot}
\end{eqnarray}
And:
\begin{equation}
\frac{\partial J_n}{\partial \log p_{\cdot|\cdot}^{(n)}} \partial E = \frac{2}{M} \sum_{i=1}^{\nlabels}\propor_{i}^{(n)}(1 - \propor_{i}^{(n)}) \sum_{j=1}^M \Big( \sign(p^{(n)}_{j|i} - p^{(n)}_j) p^{(n)}_{j|i} \partial E_{j,i} \Big) \label{eq:grad_tot_log}
\end{equation}

If there exists such an assignment for which \ref{eq:grad_tot} is positive, then the greedy method proposed in \ref{algo:assign} finds it. Indeed, suppose that Algorithm \ref{algo:assign} assigns label $i$ to child $j$ and $i'$ to $j'$. Suppose now that another assignment $\partial E'$ sends $i$ to $j'$ and $i$ to $j'$. Then:
\begin{equation}
\frac{\partial J_n}{\partial p_{\cdot|\cdot}^{(n)}} \Big( \partial E - \partial E' \Big) = \Big( \frac{\partial J_n}{\partial p_{j|i}^{(n)}} + \frac{\partial J_n}{\partial p_{j'|i'}^{(n)}} \Big) - \Big( \frac{\partial J_n}{\partial p_{j|i'}^{(n)}} + \frac{\partial J_n}{\partial p_{j'|i}^{(n)}} \Big)
\label{eq:grad_diff}
\end{equation}
Since the algorithm assigns children by descending order of $\frac{\partial J_n}{\partial p_{j|i}^{(n)}}$ until a child $j$ is full, we have:
$$\frac{\partial J_n}{\partial p_{j|i}^{(n)}} \geq \frac{\partial J_n}{\partial p_{j|i'}^{(n)}} \;\;\;\;\;\; \text{and} \;\;\;\;\;\; \frac{\partial J_n}{\partial p_{'j|i'}^{(n)}} \geq \frac{\partial J_n}{\partial p_{j'|i}^{(n)}}$$
Hence:
$$\frac{\partial J_n}{\partial p_{\cdot|\cdot}^{(n)}} \Big( \partial E - \partial E' \Big) \geq 0$$
Thus, the greedy algorithm finds the assignment that most increases $J_n$ most under the children size constraints.

Moreover, $\frac{\partial J_n}{\partial p_{\cdot|\cdot}^{(n)}}$ is always positive for $L \leq M$ or $L \geq 2M(M-2)$.
\end{proof}

\begin{proof}[Proof of Lemma~\ref{lem:avgscorerange}]

Both $J_n$ and $J_T$ are defined as the sum of non-negative values which gives the lower-bound. We next derive the upper-bound on $J_n$. Recall:
\begin{eqnarray*}
J_n = \frac{2}{\arity}\sum_{j=1}^\arity\sum_{i=1}^\nlabels \propor^{(n)}_i|p_j^{(n)} - p^{(n)}_{j|i}| =  \frac{2}{\arity}\sum_{j=1}^\arity\sum_{i=1}^\nlabels \propor^{(n)}_i\left|\sum_{l=1}^\nlabels \propor^{(n)}_lp^{(n)}_{j|l} - p^{(n)}_{j|i}\right|
\end{eqnarray*}
since $p_j^{(n)} = \sum_{l=1}^\nlabels \propor^{(n)}_lp^{(n)}_{j|l}$. The objective $J_n$ is maximized on the extremes of the $[0,1]$ interval. Thus, define the following two sets of indices:
\[O_j = \{i: i\in\{1,2,\dots,\nlabels \}, p^{(n)}_{j|i} = 1\} \text{\:\:\:\:\:\:\:and\:\:\:\:\:\:\:} Z_j = \{i: i\in\{1,2,\dots,\nlabels\}, p^{(n)}_{j|i} = 0\}.
\]
We omit indexing these sets with $n$ for the ease of notation. We continue as follows
\begin{eqnarray*}
J_n &\leq& \frac{2}{\arity}\sum_{j=1}^\arity\left[\sum_{i \in O_j}\propor^{(n)}_i\left(1 - \sum_{l\in O_j}\propor^{(n)}_l\right) + \sum_{i \in Z_j}\propor^{(n)}_i\sum_{l\in O_j}\propor^{(n)}_l\right] \\
&=& \frac{4}{\arity}\sum_{j=1}^\arity\left[\sum_{i \in O_j}\propor^{(n)}_i - \left(\sum_{i \in O_j}\propor^{(n)}_i\right)^2\right] \\
&=& \frac{4}{\arity}\left[1 - \sum_{j=1}^\arity\left(\sum_{i \in O_j}\propor^{(n)}_i\right)^2\right], 
\end{eqnarray*}
where the last inequality is the consequence of the following: $\sum_{j=1}^\arity p^{(n)}_j = 1$ and $p^{(n)}_j = \sum_{l=1}^\nlabels \propor_l^{(n)}p^{(n)}_{j|l} = \sum_{i \in O_j}\propor^{(n)}_i$, thus $\sum_{j=1}^\arity\sum_{i \in O_j}\propor^{(n)}_i = 1$. Apllying Jensen's ineqality to the last inequality obtained gives
\begin{eqnarray*}
J_n &\leq& \frac{4}{\arity} - 4\left[\sum_{j=1}^\arity\left(\frac{1}{\arity}\sum_{i \in O_j}\propor^{(n)}_i\right)\right]^2 \\ 
&=& \frac{4}{\arity}\left(1 - \frac{1}{\arity}\right)
\end{eqnarray*}
That ends the proof. 
\end{proof}

\begin{proof}[Proof of Lemma~\ref{lem:avgscoremaxvalue}]
We start from proving that if the split in node $n$ is perfectly balanced, i.e. $\forall_{j = \{1,2,\dots,\arity\}}p^{(n)}_j = \frac{1}{\arity}$, and perfectly pure, i.e. $\forall_{\substack{ j = \{1,2,\dots,\arity\} \\ i = \{1,2,\dots,\nlabels\}}}\min(p^{(n)}_{j|i},1-p^{(n)}_{j|i}) = 0$, then $J_n$  admits the highest value $J_n = \frac{4}{\arity}\left(1 - \frac{1}{\arity}\right)$. Since the split is maximally balanced we write:
\[
J_n = \frac{2}{\arity}\sum_{j=1}^\arity\sum_{i=1}^\nlabels \propor^{(n)}_i\left|\frac{1}{\arity} - p^{(n)}_{j|i}\right|.
\]
Since the split is maximally pure, each $p^{(n)}_{j|i}$ can only take value $0$ or $1$. As in the proof of previous lemma, define two sets of indices:
\[O_j = \{i: i\in\{1,2,\dots,\nlabels\}, p^{(n)}_{j|i} = 1\} \text{\:\:\:\:\:\:\:and\:\:\:\:\:\:\:} Z_j = \{i: i\in\{1,2,\dots,\nlabels\}, p^{(n)}_{j|i} = 0\}.
\]
We omit indexing these sets with $n$ for the ease of notation. Thus
\begin{eqnarray*}
J_n &=& \frac{2}{\arity}\sum_{j=1}^\arity\left[\sum_{i\in O_j}\propor^{(n)}_i\left(1 - \frac{1}{\arity}\right) + \sum_{i\in Z_j}\propor^{(n)}_i\frac{1}{\arity}\right] \\
&=& \frac{2}{\arity}\sum_{j=1}^\arity\left[\sum_{i\in O_j}\propor^{(n)}_i\left(1 - \frac{1}{\arity}\right) + \frac{1}{\arity}\left(1-\sum_{i\in O_j}\propor^{(n)}_i\right)\right] \\
&=& \frac{2}{\arity}\left(1 - \frac{2}{\arity}\right)\sum_{j=1}^\arity\sum_{i\in O_j}\propor^{(n)}_i + \frac{2}{\arity} \\ 
&=& \frac{4}{\arity}\left(1 - \frac{1}{\arity}\right),
\end{eqnarray*}
where the last equality comes from the fact that $\sum_{j=1}^\arity p^{(n)}_j = 1$ and $p^{(n)}_j = \sum_{l=1}^\nlabels \propor^{(n)}_lp^{(n)}_{j|l} = \sum_{i \in O_j}\propor^{(n)}_i$, thus $\sum_{j=1}^\arity\sum_{i \in O_j}\propor^{(n)}_i = 1$.

Thus we are done with proving one induction direction. Next we prove that  if $J_n$  admits the highest value $J_n = \frac{4}{\arity}\left(1 - \frac{1}{\arity}\right)$, then the split in node $n$ is perfectly balanced, i.e. $\forall_{j = \{1,2,\dots,\arity\}}p^{(n)}_j = \frac{1}{\arity}$, and perfectly pure, i.e. $\forall_{\substack{ j = \{1,2,\dots,\arity\} \\ i = \{1,2,\dots,\nlabels\}}}\min(p^{(n)}_{j|i},1-p^{(n)}_{j|i}) = 0$.

Without loss of generality assume each $\propor^{(n)}_i \in (0,1)$. The objective $J_n$ is certainly maximized in the extremes of the interval $[0,1]$, where each $p^{(n)}_{j|i}$ is either $0$ or $1$. Also, at maximum it cannot be that for any given $j$, all $p^{(n)}_{j|i}$'s are $0$ or all $p^{(n)}_{j|i}$'s are $1$. The function $J(h)$ is differentiable in these extremes. Next, define three sets of indices:
\[\mathcal{A}_j = \{i:\sum_{l=1}^\nlabels \propor^{(n)}_ip^{(n)}_{j|l} \geq p^{(n)}_{j|i}\} \text{\:\:\:\:\:\:\:and\:\:\:\:\:\:\:} \mathcal{B}_j = \{i:\sum_{l=1}^\nlabels \propor^{(n)}_ip^{(n)}_{j|l} < p^{(n)}_{j|i}\} \text{\:\:\:\:\:\:\:and\:\:\:\:\:\:\:} \mathcal{C}_j = \{i:\sum_{l=1}^\nlabels \propor^{(n)}_ip^{(n)}_{j|l} > p^{(n)}_{j|i}\}.
\]
We omit indexing these sets with $n$ for the ease of notation. Objective $J_n$ can then be re-written as
\[J_n = \frac{2}{\arity}\sum_{j=1}^\arity\left[\sum_{i\in \mathcal{A}_j} \propor^{(n)}_i \left(\sum_{l=1}^\nlabels \propor^{(n)}_ip^{(n)}_{j|l} - p^{(n)}_{j|i}\right) + 2\sum_{i\in \mathcal{B}_j} \propor^{(n)}_i \left( p^{(n)}_{j|i} - \sum_{l=1}^\nlabels \propor^{(n)}_ip^{(n)}_{j|l}\right)\right], 
\]
We next compute the derivatives of $J_n$ with respect to $p^{(n)}_{j|z}$, where $z = \{1,2,\dots,\nlabels\}$, everywhere where the function is differentiable and obtain
\[\frac{\partial J_n}{\partial p^{(n)}_{j|z}} = \left \{
  \begin{tabular}{c}
 $2\propor^{(n)}_z(\sum_{i\in\mathcal{C}_j}\propor^{(n)}_i - 1)\:\:\:\:\:$ if$\:$$z\in\mathcal{C}_j$\\
  $\:2\propor^{(n)}_z(1 - \sum_{i\in\mathcal{B}_j}\propor^{(n)}_i)\:\:\:\:$ if$\:$$z\in\mathcal{B}_j$
  \end{tabular}
\right.,
\]
Note that in the extremes of the interval $[0,1]$ where $J_n$ is maximized, it cannot be that $\sum_{i\in\mathcal{C}_j}\propor^{(n)}_i = 1$ or $\sum_{i\in\mathcal{B}_j}\propor^{(n)}_i = 1$ thus the gradient is non-zero. This fact and the fact that $J_n$ is convex imply that $J_n$ can \textit{only} be maximized at the extremes of the $[0,1]$ interval. Thus if $J_n$ admits the highest value, then the node split is perfectly pure. We still need to show that if $J_n$ admits the highest value, then the node split is also perfectly balanced. We give a proof by contradiction, thus we assume that at least for one value of $j$, $p^{(n)}_j \neq \frac{1}{\arity}$, or in other words if we decompose each $p^{(n)}_j$ as $p^{(n)}_j = \frac{1}{\arity} + x_j$, then at least for one value of $j$, $x_j \neq 0$. Lets once again define two sets of indices (we omit indexing $x_j$ and these sets with $n$ for the ease of notation):
\[O_j = \{i: i\in\{1,2,\dots,\nlabels\}, p^{(n)}_{j|i} = 1\} \text{\:\:\:\:\:\:\:and\:\:\:\:\:\:\:} Z_j = \{i: i\in\{1,2,\dots,\nlabels\}, p^{(n)}_{j|i} = 0\},
\]
and recall that $p^{(n)}_j = \sum_{l=1}^\nlabels \propor^{(n)}_lp^{(n)}_{j|l} = \sum_{i \in O_j}\propor^{(n)}_i$.
We proceed as follows
\begin{eqnarray*}
\frac{4}{\arity}\left(1 - \frac{1}{\arity}\right) = J_n &=& \frac{2}{\arity}\sum_{j=1}^\arity\left[\sum_{i \in O_j} \propor^{(n)}_i(1 - p^{(n)}_j) + \sum_{i \in Z_j} \propor^{(n)}_ip^{(n)}_j\right] \\
&=& \frac{2}{\arity}\sum_{j=1}^\arity\left[p^{(n)}_j(1 - p^{(n)}_j) + p^{(n)}_j(1-p^{(n)}_j)\right] \\
&=& \frac{4}{\arity}\sum_{j=1}^\arity\left[p^{(n)}_j - (p^{(n)}_j)^2\right] \\
&=& \frac{4}{\arity}\left[1 - \sum_{j=1}^\arity(p^{(n)}_j)^2\right] \\
&=& \frac{4}{\arity}\left[1 - \sum_{j=1}^\arity\left(\frac{1}{\arity} + x_j\right)^2\right] \\
&=& \frac{4}{\arity}\left(1 - \frac{1}{\arity} - \frac{2}{\arity}\sum_{j=1}^\arity x_j - \sum_{j=1}^\arity x_j^2\right) \\
&<& \frac{4}{\arity}\left(1 - \frac{1}{\arity}\right) \\
\end{eqnarray*}
Thus we obtain the contradiction which ends the proof.
\end{proof}

\begin{proof}[Proof of Lemma~\ref{lem:isol_bal}]
Since we node that the split is perfectly pure, then each $p_{j|i}^{(n)}$ is either $0$ or $1$. Thus we define two sets
\[O_j = \{i: i\in\{1,2,\dots,\nlabels\}, p^{(n)}_{j|i} = 1\} \text{\:\:\:\:\:\:\:and\:\:\:\:\:\:\:} Z_j = \{i: i\in\{1,2,\dots,\nlabels\}, p^{(n)}_{j|i} = 0\}.
\]
and thus
\[J_n = \frac{2}{\arity}\sum_{j=1}^\arity\left[\sum_{i\in O_j}\propor^{(n)}_i\left(1 - p_j\right) + \sum_{i\in Z_j}\propor^{(n)}_ip_j\right] 
\]
Note that $p_j = \sum_{i\in O_j}\propor_i^{(n)}$.
Then
\[J_n = \frac{2}{\arity}\sum_{j=1}^\arity\left[p_j\left(1 - p_j\right) + (1-p_j)p_j\right] = \frac{4}{\arity}\sum_{j=1}^\arity p_j\left(1 - p_j\right) = \frac{4}{\arity}\left(1 - \sum_{j=1}^{\arity}p_j^2\right)
\]
and thus 
\begin{equation}
\sum_{j=1}^{\arity}p_j^2 = 1 - \frac{MJ_n}{4}.
\label{eq:onehelp}
\end{equation}
Lets express $p_j$ as $p_j = \frac{1}{\arity} + \epsilon_j$, where $\epsilon_j \in [-\frac{1}{\arity}, 1 - \frac{1}{\arity}]$. Then
\begin{equation}
\sum_{j=1}^{\arity}p_j^2 = \sum_{j=1}^{\arity}\left(\frac{1}{\arity} + \epsilon_j\right)^2 = \frac{1}{\arity} + \frac{2}{\arity}\sum_{j=1}^{\arity}\epsilon_j + \sum_{j=1}^{\arity}\epsilon_j^2 = \frac{1}{\arity} + \sum_{j=1}^{\arity}\epsilon_j^2,
\label{eq:twohelp}
\end{equation}
since $\frac{2}{\arity}\sum_{j=1}^{\arity}\epsilon_j = 0$.
Thus combining Equation~\ref{eq:onehelp} and~\ref{eq:twohelp}
\[\frac{1}{\arity} + \sum_{j=1}^{\arity}\epsilon_j^2 = 1 - \frac{MJ_n}{4}
\]
and thus
\[\sum_{j=1}^{\arity}\epsilon_j^2 = 1 - \frac{1}{\arity} - \frac{MJ_n}{4}.
\]
The last statement implies that
\[\max_{j = 1,2,\dots,\arity}\epsilon_j \leq \sqrt{1 - \frac{1}{\arity} - \frac{MJ_n}{4}},
\]
which is equivalent to
\[\min_{j = 1,2,\dots,\arity} p_j = \frac{1}{\arity} - \max_j\epsilon_j \geq \frac{1}{\arity} - \sqrt{1 - \frac{1}{\arity} - \frac{MJ_n}{4}} = \frac{1}{\arity} - \frac{\sqrt{M(J^{*} - J_n)}}{2}.
\]
\end{proof}

\begin{proof}[Proof of Lemma~\ref{lem:isol_pur}]
Since the split is perfectly balanced we have the following:
\[J_n = \frac{2}{\arity}\sum_{j=1}^{\arity}\sum_{i=1}^\nlabels \propor_i^{(n)}\left|\frac{1}{\arity} - p_{j|i}^{(n)}\right| = \frac{2}{\arity}\sum_{i=1}^\nlabels\sum_{j=1}^{\arity} \propor_i^{(n)}\left|\frac{1}{\arity} - p_{j|i}^{(n)}\right|
\]
Define two sets
\[\mathcal{A}_i = \{j: j\in\{1,2,\dots,\nlabels\}, p^{(n)}_{j|i} < \frac{1}{\arity}\} \text{\:\:\:\:\:\:\:and\:\:\:\:\:\:\:} \mathcal{B}_i = \{j: j\in\{1,2,\dots,\nlabels\}, p^{(n)}_{j|i} \geq \frac{1}{\arity}\}.
\]
Then
\begin{eqnarray*}
J_n &=& \frac{2} {\arity}\sum_{i=1}^\nlabels \left[ \sum_{j\in\mathcal{A}_i}  \propor_i^{(n)}\left(\frac{1}{\arity} - p_{j|i}^{(n)}\right) + \sum_{j\in\mathcal{B}_i} \propor_i^{(n)}\left(p_{j|i}^{(n)} - \frac{1}{\arity}\right) \right]\\
&=& \frac{2} {\arity}\sum_{i=1}^\nlabels \propor_i^{(n)} \left[\sum_{j\in\mathcal{A}_i} \left(\frac{1}{\arity} - p_{j|i}^{(n)}\right) + \sum_{j\in\mathcal{B}_i} \left(p_{j|i}^{(n)} - \frac{1}{\arity}\right) \right] \\
&=& \frac{2} {\arity}\sum_{i=1}^\nlabels \propor_i^{(n)} \left[\sum_{j\in\mathcal{A}_i} \left(\frac{1}{\arity} - p_{j|i}^{(n)}\right) + \sum_{j\in\mathcal{B}_i} \left( (1 - \frac{1}{\arity}) - (1 - p_{j|i}^{(n)}) \right) \right] \\
\end{eqnarray*}
Recall that the optimal value of $J_n$ is:
\[J^* = \frac{4}{\arity}\left(1 - \frac{1}{\arity} \right) = \frac{2}{\arity}\sum_{i=1}^{\nnodes}\propor_i^{(n)} \left[ \left(\arity - 1 \right)\frac{1}{\arity} + \left( 1 - \frac{1}{\arity} \right) \right] = \frac{2}{\arity}\sum_{i=1}^{\nnodes}\propor_i^{(n)} \left[ \left( \sum_{j \in \mathcal{A}_i \cup  \mathcal{B}_i} \frac{1}{\arity} \right) - \frac{1}{\arity} + \left( 1 - \frac{1}{\arity} \right) \right] \]
Note $\mathcal{A}_i$ can have at most $\arity - 1$ elements. Furthermore, $\forall j \in \mathcal{A}_i, p_{j|i}^{(n)} < 1 - p_{j|i}^{(n)}$. Then, we have:
\begin{equation*}
J^* - J^n = \frac{2} {\arity}\sum_{i=1}^\nlabels \propor_i^{(n)} \left[ \sum_{j\in\mathcal{A}_i}  p_{j|i}^{(n)} + \sum_{j\in\mathcal{B}_i} \left( (1 - p_{j|i}^{(n)}) + \frac{1}{\arity} - (1 - \frac{1}{\arity}) \right) - \frac{1}{\arity} + \left( 1 - \frac{1}{\arity} \right) \right]
\end{equation*}
Hence, since $\mathcal{B}_i$ has at least one element:
\begin{eqnarray*}
J^* - J^n &\geq& \frac{2} {\arity}\sum_{i=1}^\nlabels \propor_i^{(n)} \left[ \sum_{j\in\mathcal{A}_i}  p_{j|i}^{(n)} + \sum_{j\in\mathcal{B}_i} \left( 1 - p_{j|i}^{(n)} \right) \right]\\
&\geq& \frac{2} {\arity}\sum_{i=1}^\nlabels \propor_i^{(n)} \left[ \sum_{j = 1}^{\arity}\min(p_{j|i}^{(n)}, 1 - p_{j|i}^{(n)}) \right]\\
&\geq&  2 \alpha
\end{eqnarray*}
\end{proof}

\begin{proof}[Proof of Theorem~\ref{thm:errorboundgen}]
Let the weight of the tree leaf be defined as the probability that a randomly chosen data point $x$ drawn from some fixed target distribution $\mathcal{P}$ reaches this leaf. Suppose at time step $t$, $n$ is the heaviest leaf and has weight $w$. Consider splitting this leaf to $\arity$ children $n_1, n_2, \dots, n_\arity$. Let the weight of the $j^{\text{th}}$ child be denoted as $w_j$. Also for the ease of notation let $p_j$ refer to $p_j^{(n)}$ (recall that $\sum_{j=1}^mp_j = 1$)
and $p_{j|i}$ refer to $p_{j|i}^{(n)}$, and furthermore let $\propor_i$ be the shorthand for
$\propor^{(n)}_i$. Recall that $p_j = \sum_{i=1}^\nlabels \propor_ip_{j|i}$ and
$\sum_{i=1}^\nlabels \propor_i = 1$. Notice that for any $j = \{1,2,\dots,\arity\}$, $w_j = wp_j$. Let ${\bm \propor}$ be the $k$-element vector with $i^{th}$
entry equal to $\propor_i$. Define the following function: $\tilde{G}^e({\bm \propor}) = \sum_{i =
  1}^\nlabels \propor_{i}\ln \left( \frac{1}{\propor_{i}} \right)$. Recall the expression for the entropy of tree leaves: $G^e = \sum_{l \in \mathcal{L}}w_l\sum_{i=1}^\nlabels \propor^{(l)}_{i}\ln \left( \frac{1}{\propor^{(l)}_{i}} \right)$, where $\mathcal{L}$ is a set of all tree leaves. Before the split the contribution of node $n$ to 
$G^e$ was equal to $w\tilde{G}^e({\bm \propor})$. Note that for any $j = \{1,2,\dots,\arity\}$, $\propor^{(n_j)}_i =
\frac{\propor_ip_{j|i}}{p_j}$ is the probability that a randomly chosen $x$
drawn from $\mathcal{P}$ has label $i$ given that $x$ reaches node
$n_j$. For brevity, let $\propor^{n^j}_i$ be denoted as $\propor_{j,i}$. Let ${\bm \propor}_j$ be the
$k$-element vector with $i^{th}$ entry equal to $\propor_{j,i}$. Notice that ${\bm \propor} = \sum_{j=1}^\arity p_j{\bm \propor}_j$. After the split the contribution of the same,
now internal, node $n$ changes to $w\sum_{j=1}^\arity p_j\tilde{G}^e({\bm \propor}_j)$. We denote the difference between the contribution of node $n$ to the value of the entropy-based objectives in times $t$ and $t+1$ as
\begin{equation}
  \Delta_t^e := G_t^e - G_{t+1}^e =  w\left[\tilde{G}^e({\bm \propor}) - \sum_{j=1}^\arity p_j\tilde{G}^e({\bm \propor}_j)\right].
  \label{eqn:ent-decrease}
\end{equation}
The entropy function $\tilde{G}^e$ is strongly concave with respect to $l_1$-norm with modulus $1$, thus we extend the inequality given by Equation 7 in~\cite{AKM2016} by applying Theorem 5.2. from~\cite{AGNS2011} and obtain the following bound
\begin{eqnarray*}
  \Delta_t^e &=&  w\left[\tilde{G}^e({\bm \propor}) - \sum_{j=1}^\arity p_j\tilde{G}^e({\bm \propor}_j)\right]\\
  &\geq&  w\frac{1}{2}\sum_{j=1}^\arity p_j\|\propor_j - \sum_{l=1}^\arity p_l\propor_l\|_1^2\\
  &=&  w\frac{1}{2}\sum_{j=1}^\arity p_j\left( \sum_{i=1}^\nlabels \left| \frac{\propor_i p_{j|i}}{p_j} - \sum_{l=1}^\arity p_l\frac{\propor_i p_{l|i}}{p_l} \right| \right)^2 \\
    &=&  w\frac{1}{2}\sum_{j=1}^\arity p_j\left( \sum_{i=1}^\nlabels \propor_i\left| \frac{p_{j|i}}{p_j} - \sum_{l=1}^\arity p_{l|i} \right| \right)^2 \\
        &=&  w\frac{1}{2}\sum_{j=1}^\arity p_j\left( \sum_{i=1}^\nlabels \propor_i\left| \frac{p_{j|i}}{p_j} - 1 \right| \right)^2 \\
                &=&  w\frac{1}{2}\sum_{j=1}^\arity \frac{1}{p_j}\left( \sum_{i=1}^\nlabels \propor_i\left| p_{j|i} - p_j \right| \right)^2.
\end{eqnarray*}
Before proceeding, we will bound each $p_j$. Note that by the \textit{Weak Hypothesis Assumption} we have 
\[\gamma \in \left[\frac{\arity}{2}\min_{j=1,2,\dots,\arity}p_j, 1 - \frac{\arity}{2}\min_{j=1,2,\dots,\arity}p_j\right],
\]
thus
\[\min_{j=1,2,\dots,\arity}p_j \geq \frac{2\gamma}{\arity},
\]
thus
all $p_j$s are such that $p_j\geq \frac{2\gamma}{\arity}$. Thus
\[\max_{j=1,2,\dots,\arity}p_j \leq 1 - \frac{2\gamma}{\arity}(\arity-1) = \frac{\arity(1-2\gamma)+2\gamma}{\arity}.
\]
Thus all $p_j$s are such that $p_j \leq \frac{\arity(1-2\gamma)+2\gamma}{\arity}$.
\begin{eqnarray*}
\Delta_t^e &\geq&  w\frac{\arity^2}{2[(\arity(1-2\gamma)+2\gamma]}\sum_{j=1}^\arity \frac{1}{\arity}\left( \sum_{i=1}^\nlabels \propor_i\left| p_{j|i} - p_j \right| \right)^2 \\
                &\geq&  w\frac{\arity^2}{2[(\arity(1-2\gamma)+2\gamma]}\left(\sum_{j=1}^\arity \frac{1}{\arity}\sum_{i=1}^\nlabels \propor_i\left| p_{j|i} - p_j \right| \right)^2 \\
                &=&  w\frac{\arity^2}{8[(\arity(1-2\gamma)+2\gamma]}\left(\frac{2}{\arity}\sum_{j=1}^\arity\sum_{i=1}^\nlabels \propor_i\left| p_{j|i} - p_j \right| \right)^2 \\
         &=&  \frac{\arity^2}{[(\arity(1-2\gamma)+2\gamma]}\frac{wJ_n^2}{8},
\end{eqnarray*}
where the last inequality is a consequence of Jensen's inequality. $w$ can further be lower-bounded by noticing the following
\[G^e_t = \sum_{l \in \mathcal{L}}w_l\sum_{i=1}^\nlabels \propor^{(l)}_{i}\ln \left( \frac{1}{\propor^{(l)}_{i}} \right) \leq \sum_{l \in \mathcal{L}}w_l\ln \nlabels \leq w\ln \nlabels \sum_{l \in \mathcal{L}}1 = [t(\arity-1)+1]w\ln \nlabels \leq (t+1)(\arity-1)w\ln \nlabels,
\]
where the first inequality results from the fact that uniform distribution maximizes the entropy. 

This gives the lower-bound on $\Delta_t^e$ of the following form:
\[\Delta_t^e \geq \frac{\arity^2G^e_tJ_n^2}{8(t+1)[\arity(1-2\gamma) + 2\gamma](\arity-1)\ln \nlabels},
\]
and by using \textit{Weak Hypothesis Assumption} we get
\[\Delta_t^e \geq \geq \frac{\arity^2G^e_t\gamma^2}{8(t+1)[\arity(1-2\gamma) + 2\gamma](\arity-1)\ln \nlabels}
\]
Following the recursion of the proof in Section 3.2 in~\cite{AKM2016} (note that in our case $G^e_1 \leq 2(\arity-1)\ln \nlabels$), we obtain that under the \textit{Weak Hypothesis Assumption}, for any $\kappa \in [0,2(\arity-1)\ln \nlabels]$, to obtain $G_t^e \leq \kappa$ it suffices to make
\[t \geq \left(\frac{2(\arity - 1)\ln \nlabels}{\kappa}\right)^{\frac{16[\arity(1-2\gamma) + 2\gamma](\arity-1)\ln \nlabels}{\arity^2\log_2 e\gamma^2}}
\]
splits. We next proceed to directly proving the error bound. Denote $w(l)$ to be the probability that a data point $x$ reached leaf $l$. Recall that $\propor^{(l)}_i$ is the probability that the data point $x$ corresponds to label $i$ given that $x$ reached $l$, i.e. $\propor^{(l)}_i = P(y(x) = i|x \:\:\text{reached}\:\: l)$. Let the label assigned to the leaf be the majority label and thus lets assume that the leaf is assigned to label $i$ if and only if the following is true $\forall_{\substack{z = \{1,2,\dots,k\} \\ z\neq i}}\propor^{(l)}_i\geq \propor^{(l)}_z$.
Therefore we can write that
\begin{eqnarray}
\!\!\!\!\!\!\!\!\epsilon(\mathcal{T}) \!\!\!\!&=&\!\!\!\! \sum_{i=1}^{\nlabels}P(t(x) = i, y(x) \neq i)\\
&=& \sum_{l \in \mathcal{L}} w(l)\sum_{i=1}^{\nlabels}P(t(x) = i, y(x) \neq i|x \:\:\text{reached}\:\:l) \nonumber\\
&=& \sum_{l \in \mathcal{L}} w(l)\sum_{i=1}^{\nlabels}P( y(x) \neq i|t(x) = i,x \:\:\text{reached}\:\:l)P(t(x) = i|x \:\:\text{reached}\:\:l) \nonumber\\
&=& \sum_{l \in \mathcal{L}} w(l)(1 - \max(\propor^{(l)}_1,\propor^{(l)}_2,\dots,\propor^{(l)}_\nlabels))\sum_{i=1}^{\nlabels}P(t(x) = i|x \:\:\text{reached}\:\:l) \nonumber\\
&=& \sum_{l \in \mathcal{L}} w(l)(1 - \max(\propor^{(l)}_1,\propor^{(l)}_2,\dots,\propor^{(l)}_\nlabels))
\end{eqnarray}

Consider again the Shannon entropy $G(\mathcal{T})$ of the leaves of tree $\mathcal{T}$ that is defined as 
\begin{equation}
G^e(\mathcal{T}) = \sum_{l \in \mathcal{L}} w(l)\sum_{i=1}^{\nlabels}\propor^{(l)}_i\log_2\frac{1}{\propor^{(l)}_i}.
\label{eq:entropy}
\end{equation}

Let $i_l = \arg\max_{i = \{1,2,\dots,\nlabels\}}\propor^{(l)}_i$.
Note that
\begin{eqnarray}
G^e(\mathcal{T}) &=& \sum_{l \in \mathcal{L}} w(l)\sum_{i=1}^{\nlabels}\propor^{(l)}_i\log_2\frac{1}{\propor^{(l)}_i} \nonumber\\
&\geq& \sum_{l \in \mathcal{L}} w(l)\sum_{\substack{i=1 \\ i \neq i_l}}^{\nlabels}\propor^{(l)}_i\log_2\frac{1}{\propor^{(l)}_i} \nonumber\\
&\geq& \sum_{l \in \mathcal{L}} w(l)\sum_{\substack{i=1 \\ i \neq i_l}}^{\nlabels}\propor^{(l)}_i \nonumber\\
&=& \sum_{l \in \mathcal{L}} w(l)(1 - \max(\propor^{(l)}_1,\propor^{(l)}_2,\dots,\propor^{(l)}_\nlabels)) \nonumber\\
&=& \epsilon(\mathcal{T}),
\end{eqnarray}
where the last inequality comes from the fact that $\forall_{\substack{i=\{1,2,\dots,\nlabels\} \\ i \neq i_l}}\propor^{(l)}_i\leq 0.5$ and thus $\forall_{\substack{i=\{1,2,\dots,\nlabels\} \\ i \neq i_l}}\frac{1}{\propor^{(l)}_i} \in [2;+\infty]$ and consequently $\forall_{\substack{i=\{1,2,\dots,\nlabels\} \\ i \neq i_l}}\log_2\frac{1}{\propor^{(l)}_i} \in [1;+\infty]$.

We next use the proof of Theorem 6 in~\cite{AKM2016}. The proof modifies only slightly for our purposes and thus we only list these modifications below.
\begin{itemize}
\item Since we define the Shannon entropy through logarithm with base $2$ instead of the natural logarithm, the right hand side of inequality (2.6) in~\cite{ShaiSS2012} should have an additional multiplicative factor equal to $\frac{1}{\ln 2}$ and thus the right-hand side of the inequality stated in Lemma 14 has to have the same multiplicative factor.
\item For the same reason as above, the right-hand side of the inequality in Lemma 9 should take logarithm with base $2$ of $k$ instead of the natural logarithm of $k$.
\end{itemize}
Propagating these changes in the proof of Theorem 6 results in the statement of Theorem~\ref{thm:errorboundgen}.

\end{proof}

\begin{proof}[Proof of Corollary~\ref{corr:errorboundgensquare}]
Note that the lower-bound on $\Delta_t^e$ from the previous prove could be made tighter as follows:
\begin{eqnarray*}
  \Delta_t^e &\geq& w\frac{1}{2}\sum_{j=1}^\arity \frac{1}{p_j}\left( \sum_{i=1}^\nlabels \propor_i\left| p_{j|i} - p_j \right| \right)^2 \\
                &=&  w\frac{\arity^2}{2}\sum_{j=1}^\arity \frac{1}{\arity}\left( \sum_{i=1}^\nlabels \propor_i\left| p_{j|i} - p_j \right| \right)^2 \\
                &\geq&  w\frac{\arity^2}{2}\left(\sum_{j=1}^\arity \frac{1}{\arity}\sum_{i=1}^\nlabels \propor_i\left| p_{j|i} - p_j \right| \right)^2 \\
                &=&  w\frac{\arity^2}{8}\left(\frac{2}{\arity}\sum_{j=1}^\arity\sum_{i=1}^\nlabels \propor_i\left| p_{j|i} - p_j \right| \right)^2 \\
         &=&  \frac{\arity^2wJ_n^2}{8},
\end{eqnarray*}
where the first inequality was taken from the proof of Theorem~\ref{thm:errorboundgen} and the following equality follows from the fact that each node is balanced. By next following exactly the same steps as shown in the proof of Theorem~\ref{thm:errorboundgen} we obtain the corollary.
\end{proof}

\section{Experimental Setting}

\subsection{Classification}

For the YFCC100M experiments, we learned our models with SGD with a linearly decreasing rate for five epochs. We run a hyper-parameter search on the learning rate (in $\{0.01, 0.02, 0.05, 0.1, 0.25, 0.5\}$). In the learned tree settings, the learning rate stays constant for the first half of training, during which the AssignLabels() routine is called 50 times. We run the experiments in a Hogwild data-parallel setting using 12 threads on an Intel Xeon E5-2690v4 2.6GHz CPU. At prediction time, we perform a truncated depth first search to find the most likely label (using the same idea as in a branch-and-bound algorithm: if a node score is less than that of the best current label, then all of its descendants are out).

\begin{table}[t!]
\setlength{\tabcolsep}{5.2pt}
\vspace{-0.025in}
\centering
\begin{tabular}{c l c c c c c}
\toprule
$d$ &\multicolumn{1}{c}{Model} & Arity & Prec & Rec & Train & Test \\ \midrule
\multirow{6}{*}{50} & TagSpace  & -    & 30.1 & -      & 3h8   & 6h\\ \cmidrule{2-7}
 & FastText  & 2    & 27.2 & 4.17   & \textbf{8m} & \textbf{1m}   \\ \cmidrule{2-7}
 & \multirow{2}{*}{Huffman Tree} & 5 & 28.3 & 4.33   & \textbf{8m}  & \textbf{1m}  \\
 &    & 20   & 29.9 & 4.58   & 10m   & 3m\\ \cmidrule{2-7}
 & \multirow{2}{*}{Learned Tree} & 5 & 31.6 & 4.85 & 18m   & \textbf{1m}   \\ 
 &         & 20   & \textbf{32.1} & \textbf{4.92}   & 30m   & 3m   \\ \midrule
\multirow{6}{*}{200} & TagSpace  & -    & 35.6 & - & 5h32  & 15h  \\ \cmidrule{2-7}
 & FastText  & 2    & 35.2 & 5.4    & \textbf{12m} & \textbf{1m}   \\ \cmidrule{2-7}
 & \multirow{2}{*}{Huffman Tree} & 5 & 35.8 & 5.5  & 13m   & 2m   \\
  & & 20   & 36.4 & 5.59   & 18m   & 3m   \\ \cmidrule{2-7}
 &\multirow{2}{*}{Learned Tree} & 5  & 36.1 & 5.53   & 35m   & 3m   \\
  &  & 20    & \textbf{36.6} & \textbf{5.61}   & 45m   & 8m   \\ \bottomrule
\end{tabular}
\caption{Classification performance on the YFCC100M dataset.}
\label{tab:class-res-full}
\end{table}

\subsection{Density Estimation}

In our experiments, we use a context window size of 4. We optimize the objectives with Adagrad, run a hyper-parameter search on the batch size (in $\{32, 64, 128\}$) and learning rate (in $\{0.01, 0.02, 0.05, 0.1, 0.25, 0.5\}$). The hidden representation dimension is $200$. In the learned tree settings, the AssignLabels() routine is called 50 times per epoch. We used a 12GB NVIDIA GeForce GTX TITAN GPU and all tree-based models are 65-ary.

For the Cluster Tree, we learn dimension 50 word embeddings with FastTree for 5 epochs using a hierarchical softmax loss, then obtain $45 = 65^2$ centroids using the ScikitLearn implementation of MiniBatchKmeans, and greedily assign words to clusters until full (when a cluster has 65 words).

\begin{algorithm}[h!]
\caption{Label Assignment Algorithm under Depth Constraint}
\label{algo:assign-depth}
\begin{multicols}{2}
\begin{tabular}{l}
\textbf{Input} Node statistics, max depth $D$\\
\hspace{0.33in} Paths from root to labels: $\mathcal{P} = (\mathbf{c}^i)_{i=1}^\nlabels$\\
\hspace{0.33in} node ID $n$ and depth $d$\\
\hspace{0.33in} List of labels currently reaching the node\\
\textbf{Ouput} Updated paths\\
\hspace{0.38in} Lists of labels now assigned to each of $n$'s\\
\hspace{0.38in} children under depth constraints\\
\\
\textbf{procedure} \textsf{AssignLabels} (labels, $n$, $d$)\\
\hspace{0.2in} // {\sl{first, compute $p_j^{(n)}$ and $p_{j|i}^{(n)}$. $\odot$ is the element-wise}}\\
\hspace{0.2in} // {\sl{multiplication}}\\
\hspace{0.2in} $\mathbf{p}^{avg}_0 \leftarrow \mathbf{0}$\\
\hspace{0.2in} $\text{count} \leftarrow 0$\\
\hspace{0.2in} \textbf{for} $i$ in labels \textbf{do}\\
\hspace{0.4in} $\mathbf{p}^{avg}_0 \leftarrow \mathbf{p}^{avg}_0 + \text{SumProbas}_{n, i}$\\
\hspace{0.4in} $\text{count} \leftarrow \text{count} + \text{Counts}_{n, i}$\\
\hspace{0.4in} $\mathbf{p}^{avg}_i \leftarrow \text{SumProbas}_{n, i} / \text{Counts}_{n, i}$\\
\hspace{0.2in} $\mathbf{p}^{avg}_0 \leftarrow \mathbf{p}^{avg}_0 / \text{count}$\\
\end{tabular}

\begin{tabular}{l}
\hspace{0.2in} // {\sl{then, assign each label to a child of $n$ under depth}}\\
\hspace{0.2in} // {\sl{constraints}}\\
\hspace{0.2in} unassigned $\leftarrow$ labels\\
\hspace{0.2in} full $\leftarrow \emptyset$\\
\hspace{0.2in} \textbf{for} $j=1$ to $\arity$ \textbf{do}\\
\hspace{0.4in} $\text{assigned}_j \leftarrow \emptyset$\\
\hspace{0.2in} \textbf{while} $\text{unassigned} \neq \emptyset$ \textbf{do}\\
\hspace{0.4in} \Big{/}\!\!\Big{/}{\sl{$\frac{\partial J_n}{ \partial p^{(n)}_{j|i}}$ is given in Equation \ref{eq:gradients}}}\\
\hspace{0.4in} $(i^*, j^*) \leftarrow \operatorname*{argmax}\limits_{i \in \text{unassigned}, j \not \in \text{full}}\left(\frac{\partial J_n}{ \partial p^{(n)}_{j|i}}\right)$ \label{algo:line:grad_sort}\\
\hspace{0.4in} $\mathbf{c}^{i^*}_d \leftarrow (n, j^*)$\\
\hspace{0.4in} $\text{assigned}_{j^*} \leftarrow \text{assigned}_{j^*} \cup \{ i^* \}$\\
\hspace{0.4in} $\text{unassigned} \leftarrow \text{unassigned} \setminus \{ i^* \}$\\
\hspace{0.4in} \textbf{if} $|\text{assigned}_{j^*}| = \arity^{D - d}$ \textbf{then}\\
\hspace{0.6in} $\text{full} \leftarrow \text{full} \cup \{j^*\}$\\
\hspace{0.2in} \textbf{for} $j=1$ to $\arity$ \textbf{do}\\
\hspace{0.4in} \textsf{AssignLabels} ($\text{assigned}_j$, $\text{child}_{n, j}$, $d + 1$)\\
\hspace{0.2in} \textbf{return} assigned
\end{tabular}

\end{multicols}
\end{algorithm}

\newpage
\begin{table}[htp!]
\centering
\begin{tabular}{|l|l|l|l|}
\hline
Leaf 229   & Leaf 230 & Leaf 300    & Leaf 231  \\  \hhline{=#=#=#=}
suggested  & vegas    & payments    & operates  \\ \hline
watched    & \&       & buy-outs    & includes  \\ \hline
created    & calif.   & swings      & intends   \\ \hline
violated   & park     & gains       & makes     \\ \hline
introduced & n.j.     & taxes       & means     \\ \hline
discovered & conn.    & operations  & helps     \\ \hline
carried    & pa.      & profits     & seeks     \\ \hline
described  & pa.      & penalties   & reduces   \\ \hline
accepted   & ii       & relations   & continues \\ \hline
listed     & d.       & liabilities & fails     \\ \hline
\ldots     & \ldots   & \ldots		& \ldots    \\ \hline
\end{tabular}
\caption{Example of labels reaching leaf nodes in the final tree. We can identify a leaf for 3rd person verbs, one for past participates, one for plural nouns, and one (loosely) for places.}
\label{tab:leaves}
\end{table}

\end{document}